\documentclass{article}

\usepackage{amsmath}
\usepackage{amssymb}
\usepackage{mathtools}
\usepackage{amsthm}
\usepackage{multirow}
\usepackage{xurl}

\usepackage[utf8]{inputenc} %
\usepackage[T1]{fontenc}    %
\usepackage{hyperref}       %
\usepackage{url}            %
\usepackage{booktabs}       %
\usepackage{amsfonts}       %
\usepackage{nicefrac}       %
\usepackage{microtype}      %
\usepackage{xcolor}         %
\usepackage{amsthm}			%
\usepackage{amsmath}		%
\usepackage{amssymb}		%
\usepackage{mathtools}		%
\usepackage{todonotes}
\usepackage{aliascnt} 		%
\usepackage{enumitem}		%
\usepackage{graphicx}       %
\usepackage{float} 			%
\usepackage[table]{xcolor} %
\usepackage{xparse}

\hypersetup{colorlinks=true}

\usepackage[capitalize,noabbrev]{cleveref}
\crefname{equation}{}{}

\usepackage{algorithm}
\usepackage{amssymb,amsmath}

\setlist[enumerate,1]{label={(\roman*)}} %
\makeatletter

\hypersetup{				%
	colorlinks,
	linkcolor={blue!80!black},
	citecolor={green},
	urlcolor={blue!80!black}
}
\theoremstyle{plain}
\newtheorem{theorem}{Theorem}[section]

\newtheorem{lemma}[theorem]{Lemma}

\theoremstyle{definition}

\theoremstyle{remark}
\newtheorem{remark}[theorem]{Remark}
\newtheorem{example}[theorem]{Example}

\newcommand{\N}{\mathbb N}
\newcommand{\Z}{\mathbb Z}

\newcommand{\C}{\mathbb C}
\newcommand{\R}{\mathbb R}
\newcommand{\E}{\mathbb E}

\DeclarePairedDelimiter{\br}{[}{]}
\DeclarePairedDelimiter{\pr}{(}{)}

\DeclarePairedDelimiter{\norm}{\lVert}{\rVert}

\DeclarePairedDelimiterXPP{\Exp}[1]{\E}{[}{]}{}{#1}
\DeclarePairedDelimiterXPP{\Var}[1]{\mathrm{Var}}{(}{)}{}{#1}
\DeclarePairedDelimiter{\abs}{\lvert}{\rvert}

\DeclarePairedDelimiter{\ang}{\langle}{\rangle}

\newcommand{\qandq}{\qquad \text{and} \qquad}

\makeatletter
\ExplSyntaxOn
\seq_new:N \g_abbrs
\prop_new:N \g_abbr_counts
\tl_new:N \l_abbr_count_tl

\NewDocumentCommand{\abbr}{m m O{#1} m m O{#4}}{
    \expandafter\newcommand\csname#3\endcsname[1][]{
        \seq_if_in:NnTF \g_abbrs {#1} {
            \prop_get:NnN \g_abbr_counts {#1} \l_abbr_count_tl
            \prop_gput:Nnx \g_abbr_counts {#1} {\int_eval:n {\l_abbr_count_tl + 1}}
            \hyperref[#1]{#1}
        } {
            \seq_gput_left:Nn \g_abbrs {#1}
            \prop_gput:Nnn \g_abbr_counts {#1} {1}
            \expandafter\gdef\csname#1@def\endcsname{#2}
            \phantomsection\label{#1}
            \str_if_eq:nnTF{##1}{}{\emph{#2}}{##1}~(\hyperref[#1]{#1})
        }
    }
    \expandafter\newcommand\csname#6\endcsname[1][]{
        \seq_if_in:NnTF \g_abbrs {#1} {
            \prop_get:NnN \g_abbr_counts {#1} \l_abbr_count_tl
            \prop_gput:Nnx \g_abbr_counts {#1} {\int_eval:n {\l_abbr_count_tl + 1}}
            \hyperref[#1]{#4}
        } {
            \expandafter\gdef\csname#1@def\endcsname{#5}
            \seq_gput_left:Nn \g_abbrs {#1}
            \prop_gput:Nnn \g_abbr_counts {#1} {1}
            \phantomsection\label{#1}
            \str_if_eq:nnTF{##1}{}{\emph{#5}}{##1}~(\hyperref[#1]{#4})
        }
    }
}

\ExplSyntaxOff
\makeatother

\abbr{ANN}{artificial neural network}{ANNs}{artificial neural networks}
\abbr{PDE}{partial differential equation}{PDEs}{partial differential equations}
\abbr{PISD}{physics-informed spectral diffusion}{PISDs}{physics-informed spectral diffusion}
\abbr{DPS}{diffusion posterior sampling}{DPSs}{diffusion posterior sampling}
\abbr{SGD}{stochastic gradient descent}{SGDs}{stochastic gradient descent}
\abbr{GD}{gradient descent}{GDs}{gradient descent}
\abbr{PINN}{physics-informed neural network}{PINNs}{physics-informed neural networks}
\abbr{ODE}{ordinary differential equation}{ODEs}{ordinary differential equations}
\abbr{ADMM}{alternating direction method of multipliers}{ADMMs}{alternating direction method of multipliers}

\usepackage[accepted]{icml2026}

\icmltitlerunning{Physics-Informed Diffusion Models in Spectral Space}

\begin{document}

\title{Physics-informed diffusion models in spectral space}

 \icmlsetsymbol{equal}{*}

\twocolumn[
   \icmltitle{Physics-Informed Diffusion Models in Spectral Space}
    
	\begin{icmlauthorlist}
   \icmlauthor{Davide Gallon}{equal,eth,muenster}
   \icmlauthor{Philippe von Wurstemberger}{equal,china}
   \icmlauthor{Patrick Cheridito}{eth}
   \icmlauthor{Arnulf Jentzen}{china2,muenster}
   \end{icmlauthorlist}

   \icmlaffiliation{eth}{Department of Mathematics, ETH Zurich, Switzerland}
   \icmlaffiliation{china}{School of Data Science, The Chinese University of Hong Kong, Shenzhen (CUHK-Shenzhen), China}
   \icmlaffiliation{china2}{School of Data Science and School of Artificial Intelligence, The Chinese University of Hong Kong, Shenzhen (CUHK-Shenzhen), China}
   \icmlaffiliation{muenster}{Institute for Analysis and Numerics, University of M\"unster, Germany}
    
   \icmlcorrespondingauthor{Davide Gallon}{davide.gallon@uni-muenster.de}
   \icmlkeywords{Machine Learning, ICML}
    
   \vskip 0.3in
]

\printAffiliationsAndNotice{\icmlEqualContribution}

\begin{abstract}
We propose \PISD, a methodology that combines generative latent diffusion models with physics-informed machine learning to generate solutions of \PDEs\ conditioned on partial observations, which includes, in particular, forward and inverse \PDE\ problems.
We learn the joint distribution of \PDE\ parameters and solutions via a diffusion process in a latent space of scaled spectral representations, where Gaussian noise corresponds to functions with controlled regularity.
This spectral formulation enables significant dimensionality reduction compared to grid-based diffusion models and ensures that the induced process in function space remains within a class of functions for which the \PDE\ operators are well defined.
Building on diffusion posterior sampling, we enforce physics-informed constraints and measurement conditions during inference, applying Adam-based updates at each diffusion step.
We evaluate the proposed approach on Poisson, Helmholtz, and incompressible Navier--Stokes equations, demonstrating improved accuracy and computational efficiency compared with existing diffusion-based \PDE\ solvers, which are state of the art for sparse observations. Code is available at \href{https://github.com/deeplearningmethods/PISD}{https://github.com/deeplearningmethods/PISD}.
\end{abstract}

\section{Introduction}

Deep learning approaches for \PDEs\ have progressed toward increasingly general representations of solution spaces:
from \PINNs\ \cite{Raissi2019,Sirignano2018}, which approximate individual \PDE\ instances via residual-based objectives, to neural operators~\cite{Anandkumar2019,Kovachki2023,Lu2021,Li2021,Li2024}, which learn solution maps for families of parametric \PDEs, and more recently to generative models that learn distributions over \PDE\ parameters and solutions \cite{huang2024diffusionpde,Ciftci2024}.
Through conditional sampling, the generative perspective naturally supports fundamentally different problem settings:
\emph{forward problems}, inferring the \PDE\ solution from sparse or full observations of the coefficient;
\emph{inverse problems}, inferring the \PDE\ coefficient from sparse or full observations of the solution; and
\emph{joint reconstruction}, recovering both the solution and coefficient from sparse observations of the solution and coefficient.
Many of these problems, especially those with sparse observations, are ill-posed in the classical sense and lie outside the reach of standard solvers. 
The generative approach addresses this through a Bayesian formulation, in which the learned prior regularizes the problem and induces a well-defined posterior.

In this work, we develop such a generative framework, which we term \emph{physics-informed spectral diffusion} (\PISD).
Our approach represents functions via scaled spectral coefficients and trains a diffusion model \cite{Ho2020,Song2021,Karras2022} in the resulting finite-dimensional latent space \cite{Rombach2022,Vahdat2021}.
The scaling is obtained from the data distribution and ensures that the diffusion process induced in function space remains within a class of functions for which the underlying \PDE\ operators are well defined.
In particular, we show that if the data distribution satisfies a Sobolev regularity condition, then the induced diffusion process in function space preserves this regularity.
At inference time, we enforce physics-informed constraints and measurement conditions using a variant of \DPS\ \cite{chung2023diffusion} with Adam-based updates \cite{Kingma2014}.
We present a schematic of the \PISD\ method in \cref{fig:poisson_results} and describe it in detail in \cref{sec:method}.

\begin{figure*}[t]
    \centering
    \includegraphics[width=0.8\textwidth]{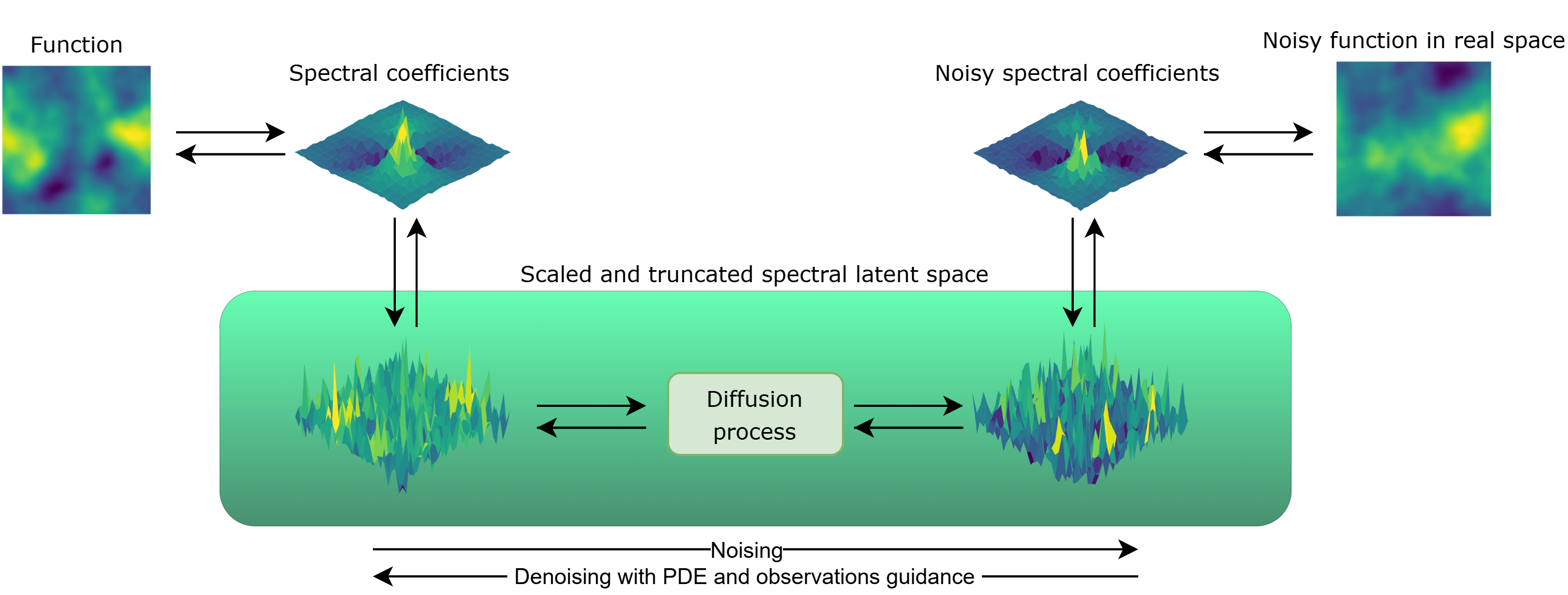} %
    \caption{Overview of physics-informed spectral diffusion (\PISD). The model learns to generate function valued solutions by performing a diffusion process in a latent space of scaled spectral function representations. At inference, \PDE\ and observation constraints are enforced via Adam-based guidance.}
    \label{fig:poisson_results}
\end{figure*}

\paragraph{Contributions.}
\begin{enumerate}[label=\arabic*.]
\item We introduce \PISD, a generative framework that learns joint distributions of \PDE\ parameters and solutions using diffusion models in a scaled spectral latent space with physics-informed guidance at inference.
\item We establish that appropriate scaling of spectral coefficients induces a diffusion process in function space with controlled Sobolev regularity, ensuring that \PDE\ operators are well defined throughout the generative process.
\item We demonstrate that the spectral formulation enables substantial dimensionality reduction compared to grid-based diffusion models, leading to significant computational speedups.
\item We extend \DPS\ with Adam-based inference-time updates and show improvements over standard \GD-based guidance.
\end{enumerate}
We test our method on Poisson, Helmholtz, and incompressible Navier--Stokes equations.
In our experiments, the \PISD\ method reduces inference time by a factor of 3 to 15 relative to state-of-the-art baselines, while matching or improving accuracy by up to a factor of 10.

\section{Related Work}

\paragraph{Physics-informed diffusion models.}
Several recent works combine diffusion models with physics-informed constraints.
DiffusionPDE~\cite{huang2024diffusionpde} and CoCoGen~\cite{jacobsen2025cocogen} enforce \PDE\ constraints via residual gradients during sampling, but operate on grid-based discretizations with finite differences.
In the infinite-resolution limit, the \PDE\ residual becomes ill-defined 
since  the Gaussian noise considered in the diffusion process converges to spatial white noise. Consequently, these methods apply \PDE\ guidance only near the end of the reverse process when the function is already somewhat regular.
We compare against DiffusionPDE in \cref{sec:numerical_results} and show that \PISD\ achieves significantly lower \PDE\ residuals.
Physics-informed diffusion models in \citet{bastek2024physics} similarly use grid-based finite differences but enforce \PDE\ constraints during training rather than inference.
Physics-informed diffusion models in \citet{shu2023physics} address flow field reconstruction with a method closely related to DiffusionPDE.
Pi-Fusion~\cite{Qiu2024} also uses a physics-informed guidance term during inference, but only considers forward problems.
FunDiff~\cite{wang2025fundiff} operates in a learned latent space with a continuous vision transformer decoder that can be differentiated at arbitrary locations.
However, \PDE\ constraints are enforced only during encoder training, not at inference.

\paragraph{Diffusion models for PDEs.}
Other approaches use diffusion models to generate \PDE\ solutions without physics-informed losses.
FunDPS~\cite{Yao2025a,Mammadov2024} extends \DPS\ to function-space-valued diffusion processes, providing a rigorous theoretical foundation for the solution of inverse problems with diffusion models in function spaces.
Several methods incorporate neural operators into the denoiser architecture~\cite{Oommen2024,Hu2024,Yang2023}.
Wavelet diffusion neural operator \cite{Hu2024} considers diffusions in the space of wavelet coefficients and is thus related to our spectral approach but targets functions with abrupt changes rather than ensuring smoothness.
Additional diffusion-based \PDE\ methods that do not employ physics-informed losses or function-space formulations include~\citet{Li2024a,Shysheya2024}.

\paragraph{Diffusion models in infinite-dimensional function spaces.}
A growing body of work extends diffusion models to infinite-dimensional function spaces.
Spectral diffusion processes~\cite{Phillips2022} formulate diffusion in the space of spectral coefficients, similar to our approach, however without the data-dependent scaling that ensures regularity throughout the diffusion process.
\citet{Kerrigan2023} generalize discrete-time diffusion models~\cite{Ho2020} to infinite-dimensional function spaces, explicitly considering diffusion processes over functions with prescribed Sobolev regularity.
\citet{Pidstrigach2024,Hagemann2023,Lim2023,Franzese2023,Lim2025} develop continuous-time diffusion models in infinite-dimensional function spaces, with particular attention to consistency across discretization levels.
\citet{Na2025} generalizes the probability-flow \ODE\ to infinite-dimensional function spaces.

\paragraph{Guidance with advanced optimizers.}
Several works improve the \GD-based guidance in \DPS\ by employing more advanced optimizers.
Concurrently and independently, \citet{Belardi2026} propose the same Adam-based replacement as we do and validate it extensively on image generation tasks.
\citet{Chung2022} incorporate \ADMM\ into the diffusion inference process to enforce data-consistency constraints.
\citet{Wang2024} unroll the entire reverse diffusion process and apply Adam updates to the final output rather than during each sampling step.
\citet{Xu2025} perform multiple \GD\ and projection steps per diffusion step, motivated by the observation that \DPS\ guidance aligns more closely with maximum a posteriori estimation than with conditional score estimation.

\section{Method}
\label{sec:method}

\newcommand{\functionspace}{\mathcal{F}}
\newcommand{\PDEloss}{\mathcal{L}_{\mathrm{PDE}}}
\newcommand{\function}{f}
\newcommand{\functionRV}{X_{\mathrm{data}}}
\newcommand{\distr}{\nu}

\subsection{Problem setting}

\newcommand{\SolutionSpace}{\mathcal{S}}
\newcommand{\CoefficientSpace}{\mathcal{A}}

We consider an abstract \PDE\ problem given by a \PDE\ residual functional $\PDEloss \colon \functionspace \to [0, \infty)$ defined on a Hilbert space $\functionspace$.
Our goal is to generate samples $f \in \functionspace$ satisfying $\PDEloss(f) = 0$,
typically conditioned on additional constraints such as boundary conditions or sparse measurements.
To this end, we assume a prior distribution $\distr \colon \mathcal{B}(\functionspace) \to [0, 1]$ supported on \PDE\ solutions, meaning that
\begin{equation*}
\label{T_B_D}
\begin{split} 
    \distr(\{ \function \in \functionspace \colon \PDEloss(\function) = 0 \}) = 1.
\end{split}
\end{equation*}
In addition, we denote by $\functionRV \in \functionspace$ a random variable which is distributed according to $\distr$.
In practice, we assume that we can draw samples of $\functionRV$ by computing approximate \PDE\ solutions with a classical solver, such as a finite difference or finite element method.

\begin{example}[Poisson equation]
\label{example:poisson}
To make the above setting concrete, we specify it for a 2-dimensional Poisson equation. 
Let $\Omega = (0,1)^2$, $\CoefficientSpace = L^2(\Omega)$,  
let $\SolutionSpace = H^2(\Omega) \cap H^1_0(\Omega)$ be the Sobolev space of twice weakly differentiable functions which vanish at the boundary, 
let $O \colon \CoefficientSpace \to \SolutionSpace$ be the solution operator which assigns to all $a \in \CoefficientSpace$ the weak solution $u \in \SolutionSpace$ of the Poisson equation
\begin{equation*}
\label{T_B_D}
\Delta u = a  \quad \text{on} \quad \Omega
\end{equation*}
with zero boundary conditions,
and let $A_{\mathrm{data}} \in \CoefficientSpace$ be a random variable 
(see, e.g., \citet[Chapters~5 and~6]{Evans2010} for definitions of Sobolev spaces and well-posedness of the considered \PDE\ problem).
We then choose 
$\functionspace = \SolutionSpace \times \CoefficientSpace$,  $\functionRV = (O(A_{\mathrm{data}}), A_{\mathrm{data}})$ and define
for all $(u, a) \in \functionspace$ that 
\begin{equation*}
\label{T_B_D}
\PDEloss((u, a)) = \|\Delta u - a\|^2_{L^2}.
\end{equation*}
\end{example}

\subsection{Diffusion model in spectral space}
\label{sec:diffusion_model}

\newcommand{\transform}{\mathcal{E}}
\newcommand{\inverseTransform}{\mathcal{I}}
\newcommand{\transformedRV}{\hat{X}_{\mathrm{data}}}
\newcommand{\latentDim}{l}
\newcommand{\denoiser}{D}
\newcommand{\sigmaMax}{\sigma_{\mathrm{max}}}
\newcommand{\latentProcess}{\hat{x}}
\newcommand{\parameterDim}{p}

To approximately generate function samples from $\distr$, we suggest to use a diffusion model over a finite-dimensional spectral encoding $\transform \colon \functionspace \to \R^{\latentDim}$ of functions in $\functionspace$.
In all our experiments, the function $\transform$ will correspond to a suitably truncated and normalized Fourier transform (cf. \cref{sec:spectral_encoding} below).
We denote by $\inverseTransform \colon \R^{\latentDim} \to \functionspace$ the approximate inverse transform of $\transform$ for which we have for all $f \in \functionspace$ that 
\begin{equation}
\label{eq:inverseTransform}
\inverseTransform(\transform(f)) \approx f.
\end{equation}
Following \citet{Karras2022}, we train a denoiser $\denoiser \colon \R^\parameterDim \times \R^{\latentDim} \times (0,\sigmaMax] \to \R^{\latentDim}$ with $\parameterDim \in \N$ trainable parameters to approximate for all noise levels $\sigma \in (0,\sigmaMax]$ that
\begin{equation}
\label{diffusion_model:eq1}
\begin{split} 
    \denoiser_{\theta^\ast}(\transformedRV + \sigma N, \sigma) 
\approx 
    \transformedRV,
\end{split}
\end{equation} 
where $\transformedRV = \transform(\functionRV)$ is the spectral encoding of the data $\functionRV$ and $N \sim \mathcal{N}(0, I_{\latentDim})$ is a Gaussian random variable independent of $\functionRV$.
For a suitable noise schedule $\sigma \colon [0,T] \to (0,\sigmaMax]$, we then expect reverse-time solutions of the \ODE\
\begin{equation}
\label{diffusion_model:eq2}
\begin{split} 
    \mathrm{d} \latentProcess_t 
= 
  -\dot{\sigma_t}(\sigma_t)^{-1}(\denoiser_{\theta^\ast}(\latentProcess_t, \sigma_t) - \latentProcess_t) \, \mathrm{d} t
\end{split}
\end{equation}
for $t \in [0,T]$
with $\latentProcess_{T} \sim \mathcal{N}(0, \sigmaMax^2 I_{\latentDim})$
to gradually remove noise so that the initial value $\latentProcess_{0} \overset{d}{\approx} \transformedRV$ is approximately distributed like the spectral encoding of the data $\transformedRV$.
With \cref{eq:inverseTransform} it thus follows that
$
    \inverseTransform(\latentProcess_{0})
$
is approximately distributed according to $\distr$.

\subsection{Physics-informed and measurement guidance}
\label{sec:guidance}

\newcommand{\measurementOperator}{\mathcal{M}}
\newcommand{\measurement}{y}
\newcommand{\measurementGuidance}{\lambda_{\mathrm{obs}}}
\newcommand{\pdeGuidance}{\lambda_{\mathrm{PDE}}}
\newcommand{\measurementDim}{m}
In the next step, we add two guidance terms to the \ODE\ in \cref{diffusion_model:eq2} to, first, help the model enforce the \PDE\ conditions, and, second, generate samples conditioned on partial measurements.
More specifically, for a measurement operator $\measurementOperator \colon \functionspace \to \R^{\measurementDim}$ and a given measurement $\measurement \in \R^{\measurementDim}$, we want to ensure that 
\begin{equation}
\label{guidance:eq0}
\begin{split} 
    \measurementOperator(\inverseTransform(\latentProcess_0)) = \measurement
    \qandq
    \PDEloss(\inverseTransform(\latentProcess_0)) = 0.
\end{split}
\end{equation}
For this we use the \DPS\ technique developed in \citet{chung2023diffusion,huang2024diffusionpde} 
which enables weak enforcement of guidance conditions in diffusion models by adding forcing terms to the backward diffusion process in the direction of the gradient of the target quantities.
Applying this to the reverse-time \ODE\ in \cref{diffusion_model:eq2} with the conditions in \cref{guidance:eq0}, and introducing guidance weights $\measurementGuidance, \pdeGuidance \in [0, \infty)$, we obtain the guided reverse-time \ODE:
\begin{multline}
\label{guidance:eq1}
    \mathrm{d} \latentProcess_t 
= 
    \pr[\Bigg]{
        -\dot{\sigma_t}(\sigma_t)^{-1}(\denoiser_{\theta^\ast}(\latentProcess_t, \sigma_t) - \latentProcess_t) \\
        +
        \measurementGuidance 
        \nabla_{\latentProcess_t} 
        \br[\Big]{\norm{
            \measurement - \measurementOperator(\inverseTransform(\denoiser_{\theta^\ast}(\latentProcess_t, \sigma_t)))
        }^2} \\
        +
        \pdeGuidance 
        \nabla_{\latentProcess_t} 
        \br[\Big]{
            \PDEloss(\inverseTransform(\denoiser_{\theta^\ast}(\latentProcess_t, \sigma_t)))
        }
    }
    \mathrm{d} t
\end{multline}
for $t \in [0,T]$
with $\latentProcess_{T} \sim \mathcal{N}(0, \sigmaMax^2 I_{\latentDim})$.
To obtain a concrete algorithm from \cref{guidance:eq1}, it remains to discretize the \ODE\ in reverse time.
We observe that, under a reverse-time Euler discretization of the \ODE\ in \cref{guidance:eq1}, the contributions of the guidance terms correspond to \GD\ steps.
Motivated by this, we suggest to replace these \GD\ updates by a more advanced gradient-based optimizer such as Adam \cite{Kingma2014}.
We show empirically that this leads to significantly better results than standard \GD\ (see \cref{appendix:add}).
We present the resulting \PISD\ method in \cref{alg:main}.

\begin{algorithm}[H]
\caption{\PISD}
\label{alg:main}
\begin{algorithmic}[1]

    \item[\textbf{\underline{Training Phase}}]
    \REQUIRE
    \PDE\ loss functional $\PDEloss \colon \functionspace \to [0, \infty)$,
    data set $X_1, \ldots, X_M \in \functionspace$ such that $\PDEloss(X_i) = 0$ for all $i \in \{1, \ldots, M\}$

    \vspace{1mm}
    \STATE Choose $\transform \colon \functionspace \to \R^{\latentDim}$, $\inverseTransform \colon \R^{\latentDim} \to \functionspace$ based on $X_1, \ldots, X_M$ (cf. \cref{sec:spectral_encoding})

    \STATE Choose denoiser $\denoiser \colon \R^\parameterDim \times \R^{\latentDim} \times (0,\sigmaMax] \to \R^{\latentDim}$

    \STATE 
    $
    \theta^\ast 
        \gets 
    \underset{\theta \in \R^\parameterDim}{\operatorname{argmin}} \,
        \mathbb{E}
        \br[\Big]{
            \norm{\denoiser_\theta(\transform(X_i) + \sigma N, \sigma) - \transform(X_i)}^2
        }
    $
    with 
    $(i, \sigma, N) \sim 
    \mathcal{U}_{\{1, \ldots, M\}} \times \mathcal{U}_{[0,\sigmaMax]} \times \mathcal{N}(0,I_{\latentDim})$

    \noindent\hspace*{-8.1mm}\rule{\dimexpr\linewidth+9.24mm}{0.1pt}

    \setcounter{ALC@line}{0}
    \item[\textbf{\underline{Inference/Sampling Phase}}]
    \REQUIRE
    Number of steps $N \in \N$, noise schedule $(\sigma_n)_{n \in \{0, \ldots, N\}} \subseteq (0,\sigmaMax]$, measurement operator $\measurementOperator \colon \functionspace \to \R^{\measurementDim}$, measurement $\measurement \in \R^{\measurementDim}$, guidance weights $\measurementGuidance, \pdeGuidance \in [0,\infty)$

    \vspace{1mm}
    \STATE $\text{ADAM} \gets$ initialize Adam optimizer on $\R^{\latentDim}$
    \STATE $\latentProcess \sim \mathcal{N}(0, \sigmaMax^2 I_{\latentDim})$ 

    \FOR{$n = 1$ to $N$}
    
        \STATE 
        $G_{\rm obs} \gets 
        \nabla_{\latentProcess} 
        \br[\Big]{\norm{
            \measurement - \measurementOperator(\inverseTransform(\denoiser_{\theta^\ast}(\latentProcess, \sigma_n)))
        }^2}$   

        \STATE 
        $G_{\rm pde} \gets  
        \nabla_{\latentProcess} 
        \br[\Big]{
            \PDEloss(\inverseTransform(\denoiser_{\theta^\ast}(\latentProcess, \sigma_n)))
        }$
    
        \STATE 
        $
        \latentProcess \gets \latentProcess - (\sigma_n)^{-1} (\denoiser_{\theta^\ast}(\latentProcess, \sigma_n) - \latentProcess) (\sigma_{n-1}-\sigma_n)$

        \STATE 
        $\latentProcess \gets \latentProcess -\text{ADAM}(\measurementGuidance G_{\rm obs} + \pdeGuidance G_{\rm pde})$
    \ENDFOR

    \STATE \textbf{Return} $\mathcal{I}(\latentProcess)$

\end{algorithmic}
\end{algorithm}

\subsection{Spectral encoding}
\label{sec:spectral_encoding}

\newcommand{\Torus}{\mathbb{T}}
\renewcommand{\P}{\mathbb{P}}
\newcommand{\stddev}{s}
\newcommand{\Noisevariable}{N}

In this section we specify the encoding $\transform$ introduced in \cref{sec:diffusion_model}.
Our approach is based on truncated spectral representations of functions in $\functionspace$, combined with a frequency-wise normalization determined from the data distribution.
Spectral representations are natural in the \PDE\ context, as they allow for explicit evaluation of differential operators.

The normalization plays a central role in our method and is therefore treated as part of the encoding rather than as a standard preprocessing step.
Unlike conventional data normalization, our scaling is applied in latent space rather than in physical space, is performed independently for each spectral coefficient, and, because the latent variables correspond to Fourier modes, induces a form of regularization in function space. As we show below, this scaling ensures that Gaussian noise in the latent space corresponds to functions with controlled regularity, which is essential to ensure that \PDE\ operators are well defined throughout the diffusion process.

\begin{figure*}
    \centering
    \includegraphics[width=0.9\textwidth]{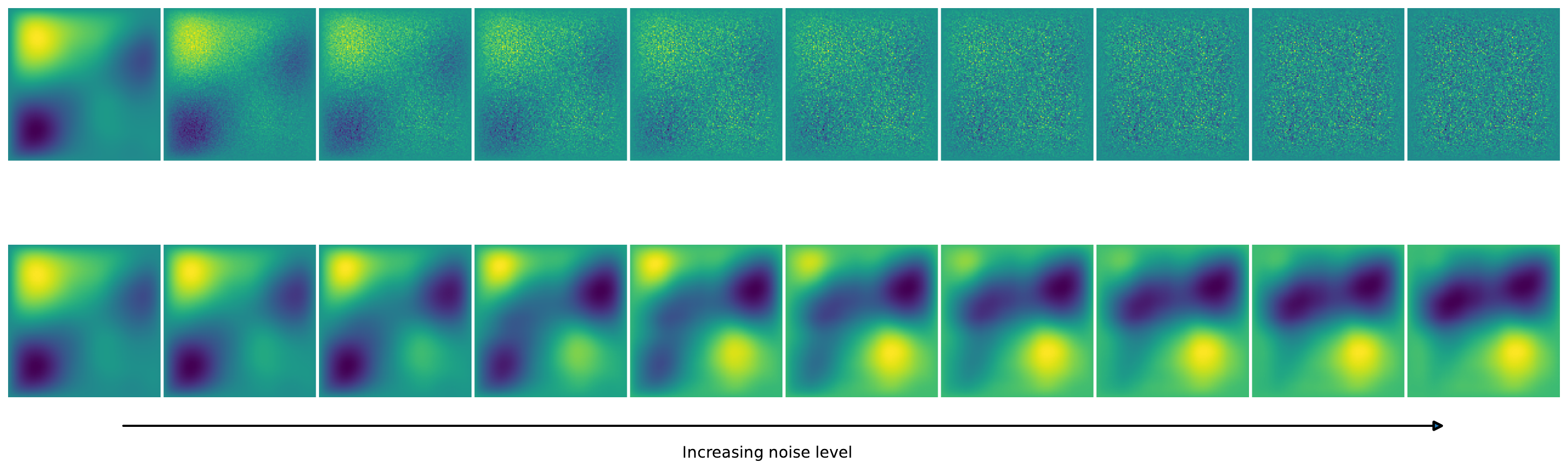} 
    \caption{
    Comparison of the forward processes in function space: The top row shows heatmaps of the function values resulting from a standard grid-based model. The function becomes increasingly noisy and irregular. In contrast, the bottom row illustrates \PISD\, where the function maintains a smooth, regular structure throughout the entire process while being transformed.
    }
    \label{fig:forward_process}
\end{figure*}

For concreteness, we present an encoding for the case $\functionspace = L^2(\Torus^d)$, where $\Torus = \R / \Z$ denotes the torus.
When $\functionspace$ consists of tuples of functions, as in \cref{example:poisson}, we apply our encoding approach component-wise and concatenate the results.
We consider complex-valued functions and Fourier series here for ease of presentation; in our experiments we sometimes also work with real-valued functions using sine or cosine series.

Let $(\phi_n)_{n \in \Z^d} \subseteq L^2(\Torus^d)$ be the Fourier basis on $\Torus^d$ given by
\begin{equation}
\label{eq:fourier_basis}
    \forall\, n \in \Z^d 
    \colon \quad 
    \phi_n(x) = e^{2\pi i \ang{n, x}}
\end{equation}
and for all $f \in L^2(\Torus^d)$, $n \in \Z^d$ let $\hat{f}(n) = \ang{f, \phi_n}_{L^2(\Torus^d)}$ denote the $n$-th Fourier coefficient of $f$.
To normalize the spectral representation, we define for each $n \in \Z^d$ the standard deviation
\begin{equation}
\label{eq:stddev}
\textstyle
    (\stddev_n)^2 = \Var[\big]{\widehat{\functionRV}(n) }.
\end{equation}
Given a truncation set $K \subseteq \Z^d$ with $|K| = \latentDim$, we define the encoding $\transform \colon \functionspace \to \C^K \cong \C^{\latentDim}$ and its inverse $\inverseTransform \colon \C^K \to \functionspace$ for all $f \in \functionspace$, $n \in K$, $\alpha \in \C^K$ by
\begin{equation}
\label{eq:encoding}
\textstyle
    \transform(f)(n) = \frac{\hat{f}(n)}{\stddev_n}
    \quad \text{and} \quad
    \inverseTransform(\alpha) = \sum_{n \in K} \stddev_n \alpha(n) \phi_n.
\end{equation}
The following lemma shows that this scaling ensures that Gaussian noise in the latent space corresponds to a suitably regular function in $\functionspace$, provided the data exhibits the same regularity.

\begingroup
\begin{lemma}
\label{lemma:regularity}
Let $k \in \N$, 
assume $\Exp[\big]{\norm{\functionRV}_{H^k(\Torus^d)}^2} < \infty$,
let $(Z_n)_{n \in \Z^d}$ be i.i.d.\ $\mathcal{N}(0, 1)$ random variables,
and let $\Noisevariable \in L^2(\Torus^d)$ be the random function given by
\begin{equation}
\label{lemma:regularity:ass1}
\textstyle
    \Noisevariable = \sum_{n \in \Z^d} \stddev_n Z_n \phi_n.
\end{equation}
Then $\Exp[\big]{\norm{\Noisevariable}_{H^k(\Torus^d)}^2} < \infty$. %
\end{lemma}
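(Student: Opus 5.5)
The plan is to work entirely in Fourier space, using the standard characterization of the Sobolev norm on the torus:
\begin{equation*}
\norm{f}_{H^k(\Torus^d)}^2 \asymp \sum_{n \in \Z^d} (1+\abs{n}^2)^k \abs{\hat f(n)}^2 ,
\end{equation*}
with equivalence constants depending only on $k$ and $d$. We may take this weighted sum as the definition of the $H^k$ norm, since finiteness of expectations is unaffected by equivalent norms. Both the hypothesis and the conclusion of \cref{lemma:regularity} then become statements about weighted sums of second moments of Fourier coefficients.

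\textbf{Step 1 (hypothesis in Fourier form).} By Tonelli's theorem,
\begin{equation*}
\Exp[\big]{\norm{\functionRV}_{H^k}^2} \asymp \sum_{n} (1+\abs{n}^2)^k \,\Exp[\big]{\abs{\widehat{\functionRV}(n)}^2} < \infty .
\end{equation*}
Since the variance is at most the second moment, \cref{eq:stddev} gives $(\stddev_n)^2 \le \Exp{\abs{\widehat{\functionRV}(n)}^2}$. Hence
\begin{equation*}
\sum_n (1+\abs{n}^2)^k (\stddev_n)^2 < \infty .
\end{equation*}
In particular $\sum_n (\stddev_n)^2 < \infty$. The second moment of $\widehat{\functionRV}(n)$ is finite because $\Exp{\norm{\functionRV}_{L^2}^2}<\infty$, so the variances are well defined.

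\textbf{Step 2 (well-definedness of $\Noisevariable$).} The partial sums of \cref{lemma:regularity:ass1} form an $L^2(\Omega; L^2(\Torus^d))$-Cauchy net. By orthonormality of $(\phi_n)$, the squared norm of a tail over indices $F$ has expectation $\sum_{n\in F} (\stddev_n)^2 \Exp{\abs{Z_n}^2} = \sum_{n \in F} (\stddev_n)^2$, which tends to zero. Thus $\Noisevariable$ is a well-defined random element of $L^2(\Torus^d)$ with $\widehat{\Noisevariable}(n) = \stddev_n Z_n$ almost surely.

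\textbf{Step 3 (conclusion).} Again by Tonelli and $\Exp{Z_n^2}=1$,
\begin{equation*}
\Exp[\big]{\norm{\Noisevariable}_{H^k}^2} \asymp \sum_n (1+\abs{n}^2)^k (\stddev_n)^2 \,\Exp{Z_n^2} = \sum_n (1+\abs{n}^2)^k (\stddev_n)^2 .
\end{equation*}
This is finite by Step 1.

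The argument is essentially routine. The only point needing care is Step 2, together with the justification that the Fourier characterization of $H^k$ applies to a random series. Interchanging expectation and summation via Tonelli (all terms are nonnegative) handles this cleanly, and a.s. membership in $H^k$ follows from finiteness of the expectation.
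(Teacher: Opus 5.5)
Your proposal is correct and is essentially the paper's proof: the Fourier characterization of $H^k(\mathbb{T}^d)$, the identity $\widehat{N}(n) = s_n Z_n$, and Tonelli to exchange expectation and summation, combined with $\mathrm{Var}(\widehat{X}_{\mathrm{data}}(n)) \le \mathbb{E}\lvert \widehat{X}_{\mathrm{data}}(n)\rvert^2$. Your inhomogeneous weight $(1+\lvert n\rvert^2)^k$ and explicit well-definedness step are slightly more careful than the paper, whose weight $\lVert n\rVert_2^{2k}$ strictly only controls the $H^k$ seminorm; the missing zero mode is harmless there, since $s_0^2 \le \mathbb{E}\lVert X_{\mathrm{data}}\rVert_{L^2}^2 < \infty$.
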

\endgroup

\cref{lemma:regularity} is proven in \cref{appendix:regularity}.
It implies that if the data distribution has finite second moments in the Sobolev space $H^k(\Torus^d)$, then Gaussian noise in the latent space induces a random function with the same Sobolev regularity.
Consequently, the forward process of \PISD\, obtained by gradually adding Gaussian noise, 
remains in $H^k(\Torus^d)$.
This ensures that the \PDE\ operators used in the \PISD\ guidance are well defined throughout the generative process.

In contrast, the forward process of grid-based diffusion models in physical space converges, in the limit of infinite resolution, to spatial white noise, which is not differentiable.
As a result, the methods proposed in \citet{huang2024diffusionpde,shu2023physics} apply \PDE\ guidance only during the final $10\%$ of the reverse process, whereas \PISD\ enforces \PDE\ constraints throughout.
\cref{fig:forward_process} illustrates this difference, and our numerical results show that enforcing \PDE\ constraints throughout enables \PISD\ to achieve much lower \PDE\ residuals than DiffusionPDE (cf. \cref{sec:poisson_helmholtz}).

\begin{remark}[Choice of truncation set]
The truncation set $K$ determines the dimension of the latent space and hence strongly influences inference time.
The canonical choice is a cube $K = \{ n \in \Z^d : \norm{n}_\infty \leq c \}$ (cf.\ \cref{truncation_ns}) for some constant $c \in (0, \infty)$.
To reduce dimensionality, we also experiment with approximate hyperbolic truncation $K = \{ n \in \Z^d : |n_1 \cdots n_d| \leq c \}$ (cf.\ \cref{truncation_hyper}).
\end{remark}

\section{Numerical Results}
\label{sec:numerical_results}

In this section we showcase the performance of \PISD\ on the Poisson, Helmholtz, and Navier--Stokes equations and compare it to the state-of-the-art methods DiffusionPDE \cite{huang2024diffusionpde} and FunDPS \cite{Yao2025a}.

\paragraph{Dataset and training.}
All datasets consist of solutions and coefficients generated from the target \PDEs\ at a resolution of $128\times128$ (cf.\ \cref{appendix:exps}).
As denoiser architecture we use a Vision Transformer \cite{50650} that we train based on \cref{diffusion_model:eq1}. We have experimented with other architectures which all yield comparable performance.

\paragraph{Comparison with other methods.} 
We compare \PISD\ to the state-of-the-art diffusion-based methods DiffusionPDE~\cite{huang2024diffusionpde} and FunDPS~\cite{Yao2025a}.
Our method specifically targets the sparse-observation regime, where classical solvers are not directly applicable and neural-operator-based methods (e.g., PINO, FNO, DeepONet) perform very poorly, as demonstrated by~\citet{huang2024diffusionpde,Yao2025a}; we therefore do not consider them direct competitors.
While \cref{tab:forward,tab:inverse} do include full-observation results, we refer the reader to~\citet{huang2024diffusionpde,Yao2025a} for comparisons with neural operators in that regime.

\paragraph{Adam guidance.} 
During inference, we enforce \PDE\ and observation constraints using \DPS, replacing the standard \GD\ updates with Adam (cf.\ \cref{sec:guidance}).
We employ a frequency-aware variant of Adam, where
low-frequency modes receive larger effective updates, capturing the dominant physical structure of the solution, while high-frequency modes are damped to improve numerical stability (see \cref{sec:frequency_aware_adam} for a detailed description).
We compare this against using \GD-based guidance in \DPS\ in \cref{appendix:add} showing that our Adam-based guidance with frequency-aware weighting yields significantly better results, particularly for inverse problems.

\subsection{Poisson and Helmholtz equations}
\label{sec:poisson_helmholtz}

\begin{table*}[h!]
\centering
\footnotesize
\caption{Forward problem, observations on $a$.}
\centering
\label{tab:forward}
\begin{tabular}{cccccccc}
\toprule
\small \textbf{PDE} &
\small \textbf{Obs.} &
\multicolumn{2}{c}{\small \textbf{PISD (ours)}} &
\multicolumn{2}{c}{\small \textbf{DiffusionPDE}} &
\multicolumn{2}{c}{\small \textbf{FunDPS}} \\
\cmidrule(lr){3-4}
\cmidrule(lr){5-6}
\cmidrule(lr){7-8}
& & \small Rel. err & \small PDE res. &
    \small Rel. err & \small PDE res. &
    \small Rel. err & \small PDE res. \\
\midrule

\multirow{3}{*}{Poisson}
 & 500  & 3.08 $\pm$ 1.71 \% & 0.87 & 4.06 $\pm$ 1.51 \% & 237.49 & \textbf{2.23 $\pm$ 1.50 \% } & 4619.34 \\
 & 1000 & \textbf{1.47 $\pm$ 0.90} \% & 2.31 & 3.35 $\pm$ 0.99 \% & 207.28 & 1.54 $\pm$ 1.05 \% & 3807.58 \\
 & Full & \textbf{0.05 $\pm$ 0.03 \% } & 3.78 & 4.04 $\pm$ 1.50 \% & 190.70 & 0.87 $\pm$ 0.44 \% & 3338.32 \\
\midrule

\multirow{3}{*}{Helmholtz}
 & 500  & 3.47 $\pm$ 1.70 \% & 0.37 & 9.55 $\pm$ 4.16 \% & 4852.36 & \textbf{2.08 $\pm$ 0.98 \% } & 3316.68 \\
 & 1000 & \textbf{1.59 $\pm$ 0.77 \%} & 0.52 & 7.46 $\pm$ 2.76 \% & 4690.36 & \textbf{1.53 $\pm$ 0.88 \%} & 3307.42 \\
 & Full & \textbf{0.04 $\pm$ 0.01 \% } & 3.97 & 8.25 $\pm$ 3.65 \% & 5600.25 & 1.14 $\pm$ 0.81 \%  & 2714.46 \\

\bottomrule
\end{tabular}
\end{table*}

\begin{table*}[h!]
\footnotesize
\caption{Inverse problem, observations on $u$.}
\label{tab:inverse}
\centering
\begin{tabular}{cccccccc}
\toprule
\small \textbf{PDE} &
\small \textbf{Obs.} &
\multicolumn{2}{c}{\small \textbf{PISD (ours)}} &
\multicolumn{2}{c}{\small \textbf{DiffusionPDE}} &
\multicolumn{2}{c}{\small \textbf{FunDPS}} \\
\cmidrule(lr){3-4}
\cmidrule(lr){5-6}
\cmidrule(lr){7-8}
& & \small Rel. err & \small PDE res. &
    \small Rel. err & \small PDE res. &
    \small Rel. err & \small PDE res. \\
\midrule

\multirow{3}{*}{Poisson}
 & 500  & \textbf{13.81 $\pm$ 3.11 \% } & 0.45 & 22.17 $\pm$ 6.61 \% & 178.46 & 21.09 $\pm$ 7.10 \% & 587.99 \\
 & 1000 & \textbf{12.09 $\pm$ 2.68 \% } & 0.44 & 18.14 $\pm$ 6.04 \% & 203.62 & 20.47 $\pm$ 6.79 \% & 460.62 \\
 & Full & \textbf{7.95 $\pm$ 1.71 \% } & 1.33 & 14.03 $\pm$ 4.31 \%  & 190.10 & 19.84 $\pm$ 0.65 \% & 429.90 \\
\midrule

\multirow{3}{*}{Helmholtz}
 & 500  & \textbf{12.76 $\pm$ 2.42 \% } & 0.21 & 19.33 $\pm$ 5.82 \% & 8916.58 &  16.26 $\pm$ 4.46 \% & 1933.61 \\
 & 1000 & \textbf{11.19 $\pm$ 1.97 \% } & 0.20  & 17.03 $\pm$ 5.08 \% & 13303.46 & 14.93 $\pm$ 3.90 \% & 2036.62 \\
 & Full & \textbf{9.03 $\pm$ 2.11 \% } & 0.84 &  15.23 $\pm$ 4.73  \%  & 19010.48 & 13.97 $\pm$ 3.60 \% & 664.21 \\

\bottomrule
\end{tabular}
\end{table*}

\begin{table*}[h!]
\centering
\footnotesize
\caption{Observations on both $a$ and $u$.}
\label{tab:double}
\begin{tabular}{ccccccccc}
\toprule
\small \textbf{PDE} &
\small \textbf{Obs.} & &
\multicolumn{2}{c}{\small \textbf{PISD (ours)}} &
\multicolumn{2}{c}{\small \textbf{DiffusionPDE}} &
\multicolumn{2}{c}{\small \textbf{FunDPS}} \\
\cmidrule(lr){4-5}
\cmidrule(lr){6-7}
\cmidrule(lr){8-9}
& & & \small Rel. err & \small PDE res. &
    \small Rel. err & \small PDE res. &
    \small Rel. err & \small PDE res. \\
\midrule
\multirow{4}{*}{Poisson} & \multirow{2}{*}{100} & $a$ &
        \textbf{18.51 $\pm$ 5.06 \% } & \multirow{2}{*}{2.11} &
        18.63 $\pm$ 6.28 \% & \multirow{2}{*}{230.77} &
        25.49 $\pm$ 12.57 \%  &  \multirow{2}{*}{5221.14} \\
        &     & $u$ &
        \textbf{1.30 $\pm$ 0.63 \%} & &
        1.39 $\pm$ 0.74 \% & &
        2.65 $\pm$ 1.89 \% &\\
        & \multirow{2}{*}{200} & $a$ &
        \textbf{13.38 $\pm$ 3.67 \%} & \multirow{2}{*}{2.30} &
        13.40 $\pm$ 4.38 \%  & \multirow{2}{*}{253.21}  &
        16.47 $\pm$ 6.48 \% & \multirow{2}{*}{4573.76} \\
        &  & $u$ &
        \textbf{0.46 $\pm$ 0.24 \%} &  &
        0.60 $\pm$ 0.26 \%  & &
        1.31  $\pm$ 0.77 \% & \\
\midrule   %
\multirow{4}{*}{Helmholtz} & \multirow{2}{*}{100} & $a$ &
        20.17 $\pm$ 4.59 \% &  \multirow{2}{*}{0.58} &
        \textbf{ 18.43 $\pm$ 5.16 \%} & \multirow{2}{*}{12700.29} &
        22.85 $\pm$ 7.20 \% & \multirow{2}{*}{4823.12} \\
        &     & $u$ &
        \textbf{1.33 $\pm$ 0.49 \%} & &
        1.55 $\pm$ 0.61 \% & &
        2.47 $\pm$ 1.33 \% & \\
        & \multirow{2}{*}{200} & $a$ &
        16.24 $\pm$ 3.36 \% &  \multirow{2}{*}{0.57} &
        \textbf{14.07 $\pm$ 3.65 \%} & \multirow{2}{*}{11255.83} &
        16.02 $\pm$ 4.36 \% & \multirow{2}{*}{4652.91} \\
        &  & $u$ &
        \textbf{0.52 $\pm$ 0.21 \%} & &
        1.10 $\pm$ 0.39 \% & &
        1.45 $\pm$ 0.58 \% & \\
\bottomrule
\end{tabular}
\end{table*}

Following \citet{huang2024diffusionpde,Yao2025a}, we first consider partial differential equations posed on a bounded domain with homogeneous Dirichlet boundary conditions. Let \(\Omega = (0,1)^2\) and denote by \(\partial\Omega\) its boundary. 
We consider the \textbf{Poisson equation}
\begin{equation*}
\Delta u(x) = a(x), \quad x \in \Omega,
\qquad
u(x) = 0, \quad x \in \partial \Omega
\end{equation*}
and the \textbf{Helmholtz equation}
\begin{equation*}
\begin{gathered}
\Delta u(x) + u(x) = a(x), \quad x \in \Omega, \\
u(x) = 0, \quad x \in \partial \Omega.
\end{gathered}
\end{equation*}
As in \cref{example:poisson}, we want to generate functions from the space $\functionspace = (H^2(\Omega) \cap H^1_0(\Omega)) \times L^2(\Omega)$.
As \PDE\ residual for the Poisson equation we use the residual 
\begin{equation*}
    \forall \, (u, a) \in \functionspace \colon \quad
    \PDEloss(u, a) = \|\Delta u - a\|^2_{L^2(\Omega)}
\end{equation*}
and for the Helmholtz equation we use the residual
\begin{equation*}
    \forall \, (u, a) \in \functionspace \colon \quad
    \PDEloss(u, a) = \|\Delta u + u - a\|^2_{L^2(\Omega)}.
\end{equation*} 
To automatically enforce the Dirichlet boundary conditions, we base our encodings of $u \in H^2_0(\Omega)$ for the \PISD\ method on the sine transform given for all $f \colon \Omega \to \mathbb{R}$, $k = (k_1, k_2) \in \mathbb{N}^2$ by
\begin{equation*}
    \hat f(k) = \int_\Omega f(x) \sin(\pi k_1 x_1) \sin(\pi k_2 x_2) \mathrm{d} (x_1, x_2)
\end{equation*}
with the corresponding inverse transform $\inverseTransform$ based on the sine series given for all $\alpha \in \mathbb{R}^{\mathbb{N}^2}$ by
\begin{equation*}
\label{eq:inverse_sine_transform}
\sum_{k \in \mathbb{N}^2} \alpha(k) \sin(\pi k_1 x_1)\sin(\pi k_2 x_2).
\end{equation*}
The encoding for $a \in L^2(\Omega)$ is also based on a sine transform. Since $a$ does not satisfy zero boundary conditions, we first extend it smoothly to a larger domain on which the extended function vanishes at the boundary, then apply the sine transform. The inverse transform is obtained by evaluating the sine series and restricting to $\Omega$.
To compute the \PDE\ residual at inference time, we use the formula
\begin{equation*}
\label{T_B_D}
\begin{split} 
    \forall \,k \in \mathbb{N}^2 \colon \quad \widehat{\Delta u}(k) = -\pi^2 \norm{k}_2^2 \hat u(k)
\end{split}
\end{equation*}
which can be conveniently computed in terms of the latent coefficients corresponding to $u$.

\paragraph{Results.}
The trained models are applied to three problem classes.
\cref{tab:forward} reports results for \emph{forward problems}, 
\cref{tab:inverse} reports results for \emph{inverse problems}, 
and
\cref{tab:double} addresses \emph{joint reconstruction}. 
The results for the forward, inverse, and joint reconstruction problem are averaged over 100 independent runs.
In all cases, we draw samples from the test set and mask out parts of them to simulate sparse observations. The reported relative errors compare each reconstruction with the corresponding original, unmasked sample.
\PDE\ residuals were computed using the same finite-difference scheme for every method, including \PISD\ (which uses spectral residuals internally during generation).

In the forward problem, \PISD\ achieves performance comparable to DiffusionPDE and FunDPS across all observation levels, and becomes more accurate as the number of observations increases.
In the inverse problem, \PISD\ consistently outperforms DiffusionPDE and FunDPS across all observation regimes. In the joint reconstruction setting, \PISD\ achieves the lowest error on the solution $u$ across all configurations and is the most accurate on the coefficient $a$ for the Poisson equation; on the Helmholtz equation, DiffusionPDE attains a slightly lower error on $a$.

Across all tasks, \PISD\ also yields significantly lower \PDE\ residuals than the baselines (cf.\ \cref{fig:comparison} for an illustration of the \PDE\ residuals).
This suggests that the remaining error in \PISD\ is dominated by the inherent uncertainty of the ill-posed problem under sparse observations, rather than by its inability to satisfy the PDE.
This interpretation is supported by the observation that \PISD's accuracy advantage over the baselines is most pronounced when more observations are available and the problem becomes less ill-posed.

Beyond matching or improving accuracy, \PISD\ has a significantly faster inference time compared to DiffusionPDE and FunDPS due to the reduced dimensionality of the spectral latent space.
From a spatial resolution of $128 \times 128$, we retain only $44 \times 44$ modes, approximately reducing inference time on a GeForce RTX 2080 Ti from 802 seconds (DiffusionPDE) and 152 seconds (FunDPS) to 52 seconds (see \cref{tab:inference_time}).

\subsection{Navier--Stokes equations}

\newcommand{\BiotSavart}{\mathcal{V}}
We consider the incompressible Navier--Stokes equations in vorticity form on the periodic domain $\Omega = \Torus^2 = \R^2 / \Z^2$ given by
\begin{equation*}
\begin{gathered}
\partial_t w(x,\tau) + v(x,\tau) \cdot \nabla w(x,\tau)
= \nu \, \Delta w(x,\tau) + q(x), \\
\nabla \cdot v(x,\tau) = 0, \quad x \in \Omega, \; \tau \in (0,T].
\end{gathered}
\end{equation*}
Here $v = (v_1, v_2) \colon \Omega \times [0,T] \to \mathbb{R}^2$ denotes the velocity field, 
$w = \nabla \times v \colon \Omega \times [0,T] \to \mathbb{R}$ the vorticity, 
$T = 1$ the time horizon,
$\nu = 0.001$ the kinematic viscosity, and 
$q(x_1, x_2) = 0.1(\sin(2\pi (x_1 + x_2)) + \cos(2\pi (x_1 + x_2)))$ a fixed source term.

To recover the velocity from the vorticity, we use the Biot--Savart law, which relates the two fields via their Fourier coefficients: for all $\tau \in (0,T]$, $k \in \Z^2 \setminus \{0\}$ we have
\begin{equation}
\label{eq:biot_savart}
\begin{split}
\widehat{v_1(\cdot, \tau)}(k) = i\,\frac{k_2}{\norm{k}^2}\,\widehat{w(\cdot, \tau)}(k), \\ 
\widehat{v_2(\cdot, \tau)}(k) = -i\,\frac{k_1}{\norm{k}^2}\,\widehat{w(\cdot, \tau)}(k).
\end{split}
\end{equation}
We denote by $\BiotSavart \colon H^2(\Torus^2) \to H^3(\Torus^2)$ the corresponding operator mapping vorticity to velocity.

We discretize the time domain into $N = 10$ steps $0 = t_1 < t_2 < \cdots < t_{N} = T$ and aim to generate the vorticity field $w$ on those time steps. 
As generation space for the \PISD\ method we choose $\functionspace = (H^2(\Torus^2))^{N}$, representing the vorticity at each time step. 
The encoding $\transform$ and its inverse $\inverseTransform$ are based on the complex Fourier transform applied to each time step separately as described in \cref{sec:spectral_encoding} (see in particular \cref{eq:fourier_basis,eq:stddev,eq:encoding}).

Since the velocity is recovered from the vorticity via $\BiotSavart$, we note that the divergence-free constraint $\nabla \cdot v = 0$ in the Navier--Stokes equations is automatically satisfied and does not need to be enforced in the \PDE\ residual.
Using finite differences in time, we define the \PDE\ residual for all $w = (w_1, \ldots, w_N) \in \functionspace$ by
\begin{equation*}
\label{PDELoss:NS}
\begin{split}
    &\PDEloss(w) \\
    &= 
    \textstyle\sum\limits_{i=2}^{N-1} 
    \norm[\Big]{
        \frac{w_{i+1} - w_{i-1}}{t_{i+1} - t_{i-1}} -  \BiotSavart(w_i) \cdot \nabla w_i - \nu \Delta w_i - q
    }_{L^2(\Torus^2)}^2 \\
    &=
    \textstyle\sum\limits_{i=2}^{N-1} 
    \textstyle\sum\limits_{k \in \Z^2 \setminus \{0\}}
    \abs[\big]{
        \tfrac{\widehat{w_{i+1}}(k) - \widehat{w_{i-1}}(k)}{t_{i+1} - t_{i-1}} \\
        &\quad
        -  \widehat{\BiotSavart(w_i) \cdot \nabla w_i}(k)
        - \nu \norm{k}_2^2 \widehat{w_i}(k) - \widehat{q}(k)
    }^2.
\end{split} 
\end{equation*}
During inference, we evaluate the \PDE\ residual using the latter equation which, except for the nonlinear advection terms $\widehat{\BiotSavart(w_i) \cdot \nabla w_i}(k)$, is conveniently expressed in terms of the latent coefficients.
The terms $\widehat{\BiotSavart(w_i) \cdot \nabla w_i}(k)$ are computed via a pseudo-spectral method: 
the spatial derivative of $\nabla w_i$ is computed explicitly in Fourier space and the Fourier coefficients of $\BiotSavart(w_i)$ are computed using the Biot--Savart law in \cref{eq:biot_savart}, both $\BiotSavart(w_i)$ and $\nabla w_i$ are transformed to physical space for pointwise multiplication, and the result is then transformed back to Fourier space.

\begin{table}[H]
\footnotesize
\caption{Navier--Stokes with sparse-in-time observations.}
\label{tab:ns}
\centering
\resizebox{\columnwidth}{!}{%
\begin{tabular}{ccccccc}
\toprule
\small \textbf{Obs.} &
\small \textbf{Time} &
\small \textbf{Data} &
\multicolumn{2}{c}{\small \textbf{PISD (ours)}} &
\small \textbf{DiffusionPDE} &
\small \textbf{FunDPS} \\
\cmidrule(lr){4-5}
& & &
\small Rel. err & \small PDE res. &
\small Rel. err &
\small Rel. err \\
\midrule

\multirow{10}{*}{500}
 & $t_1$ & $\checkmark$ & \textbf{5.19 $\pm$ 0.93 \%}  & -- & 6.55 $\pm$ 1.16 \% & 7.49 $\pm$ 1.37 \% \\
   & $t_2$ & -- & 4.27 $\pm$ 0.79 \% & 0.14 & -- & -- \\
   & $t_3$ & -- & 3.45 $\pm$ 0.52 \% & 0.15 & -- & -- \\
   & $t_4$ & -- & 3.47 $\pm$ 0.54 \% & 0.14 & -- & -- \\
   & $t_5$ & -- & 3.08 $\pm$ 0.46 \% & 0.10 & -- & -- \\
   & $t_6$ & -- & 3.02 $\pm$ 0.47 \% & 0.11 & -- & --  \\
   & $t_7$ & -- & 2.48 $\pm$ 0.41 \% & 0.14 & -- & -- \\
   & $t_8$ & -- & 2.05 $\pm$ 0.34 \% & 0.15 & -- & --  \\
   & $t_9$ & -- & 1.65 $\pm$ 0.38 \% & 0.15 & -- & --  \\
   & $t_{10}$ & $\checkmark$ & \textbf{0.21 $\pm$ 0.06 \%}  & -- & 0.50 $\pm$ 0.09 \%
 & 1.16 $\pm$ 0.22 \%  \\
\cmidrule(lr){1-7}
  \multirow{10}{*}{500}
 & $t_1$ & -- & 4.24  $\pm$ 0.62 \% & -- & -- & -- \\
   & $t_2$ & -- & 2.42 $\pm$ 0.36 \% & 0.04 & -- & --  \\
   & $t_3$ & -- & 2.00 $\pm$ 0.32 \% & 0.08 & -- & --  \\
   & $t_4$ & $\checkmark$ & 0.77 $\pm$ 0.18 \% & 0.04 & -- & -- \\
   & $t_5$ & -- & 1.18 $\pm$ 0.20 \% & 0.08 & -- & -- \\
   & $t_6$ & -- & 1.06 $\pm$ 0.21 \% & 0.08 & -- & --  \\
   & $t_7$ & $\checkmark$ & 0.32 $\pm$ 0.09 \% & 0.05 & -- & --  \\
   & $t_8$ & -- & 1.72 $\pm$ 0.31 \% & 0.07 & -- & --  \\
   & $t_9$ & -- & 1.49 $\pm$ 0.33 \% & 0.04 & -- & --  \\
   & $t_{10}$ & -- & 2.61 $\pm$ 0.49 \% & -- & -- & --  \\
\cmidrule(lr){1-7}
 \multirow{10}{*}{200}
 & $t_1$ & $\checkmark$ & \textbf{8.80 $\pm$ 1.38 \%} & -- & 10.30 $\pm$ 1.88 \% & 11.32 $\pm$ 2.06 \% \\
   & $t_2$ & -- & 7.08 $\pm$ 1.11 \% & 0.13 & -- & --  \\
   & $t_3$ & -- & 5.96 $\pm$ 0.90 \% & 0.19 & -- & --  \\
   & $t_4$ & -- & 5.14 $\pm$ 0.78 \% & 0.15 & -- & -- \\
   & $t_5$ & -- & 4.46 $\pm$ 0.69 \% & 0.09 & -- & -- \\
   & $t_6$ & -- & 3.93 $\pm$ 0.61 \% & 0.10 & -- & --  \\
   & $t_7$ & -- & 3.27 $\pm$ 0.55 \% & 0.16 & -- & --  \\
   & $t_8$ & -- & 2.63 $\pm$ 0.47 \% & 0.19 & -- & --  \\
   & $t_9$ & -- & 2.14 $\pm$ 0.43 \% & 0.13 & -- & --  \\
   & $t_{10}$ & $\checkmark$ & \textbf{1.57 $\pm$ 0.43 \%} & -- & 1.84 $\pm$ 0.45 \% & 2.74 $\pm$ 0.66 \% \\

\bottomrule
\end{tabular}
}
\end{table}

\paragraph{Results.} 
\cref{tab:ns} reports results averaged over 100 independent runs for generating the full spatio-temporal evolution of the vorticity field at time steps $t_1, \ldots, t_{10}$, conditioned on sparse observations at various times.
As before, we draw ground-truth trajectories from the test set, mask out parts of them to simulate sparse observations, and report relative errors against the corresponding unmasked trajectories.

Unlike DiffusionPDE and FunDPS, which generate only initial or terminal states, \PISD\ generates the full trajectory and supports conditioning on initial, final, or intermediate observations.
This is enabled by the fast inference afforded by the spectral representation: from a spatial resolution of $128 \times 128$, we retain only $32 \times 32$ Fourier modes per time step.

Run times are reported in \cref{tab:NS_runtime}.
In the first setting of \cref{tab:ns}, \PISD\ achieves better accuracy on the initial and terminal states than both baselines---at less than twice the runtime of FunDPS and half that of DiffusionPDE---while additionally generating all intermediate time steps.

\section{Conclusion}

We have introduced \emph{physics-informed spectral diffusion} (\PISD), a generative framework for parametric \PDEs\ that operates in a latent space of scaled spectral coefficients and enforces physics-informed constraints during inference using Adam-based updates.
By normalizing spectral coefficients according to the data distribution, \PISD\ ensures that the diffusion process remains within a class of functions with controlled Sobolev regularity, allowing \PDE\ guidance throughout the sampling process and yielding significantly lower \PDE\ residuals than existing methods.

Across forward and inverse problems for Poisson, Helmholtz, and Navier--Stokes equations, \PISD\ matches or outperforms existing diffusion-based \PDE\ solvers in reconstruction accuracy while significantly reducing inference time, achieving roughly a 3× speedup compared to FunDPS and a 15× speedup compared to DiffusionPDE.

Overall, \PISD\ provides a physically grounded and computationally efficient approach for generative \PDE\ modeling.

\section{Limitations and Future Work}

\PISD\ relies on a spectral representation that must be specified for each \PDE, and is naturally suited to regular domains with periodic, homogeneous Dirichlet, or Neumann boundary conditions.
Extending the approach to irregular geometries or more general boundary conditions would require techniques such as domain decomposition or alternative function bases.
A promising direction for future work is to replace hand-crafted spectral encodings with learned encoder-decoder pairs, for instance using neural operators as in FunDiff~\cite{wang2025fundiff}, which could enable automatic adaptation to diverse problem settings.
Additionally, the current framework requires a dataset of solution fields for training. Extending to low-data regimes is an important direction for future work.

\section*{Impact Statement}
This paper presents work whose goal is to advance the field of Machine Learning. There are many potential societal consequences of our work, none of which we feel must be specifically highlighted here.

\section*{Acknowledgments}
This work has been partially funded by the National Science Foundation of China (NSFC) under grant number W2531010.
Calculations (or parts of them) for this publication were performed on the HPC cluster PALMA II of the University of Münster, subsidised by the DFG (INST 211/667-1).
Financial support from Swiss National Science Foundation Grant 10003723 is gratefully acknowledged.
This work has also been supported by the Ministry of Culture and Science NRW as part of the Lamarr Fellow Network.
Moreover, we gratefully acknowledge the Cluster of Excellence EXC 2044/2–390685587, Mathematics M\"unster: Dynamics-Geometry-Structure funded by the Deutsche Forschungsgemeinschaft (DFG, German Research Foundation).
\newpage
{\small
\bibliography{ref}
\bibliographystyle{icml2026}}

\newpage
\appendix
\onecolumn

\section{Proof of \cref{lemma:regularity}}
\label{appendix:regularity}
\begin{proof}
The proof of \cref{lemma:regularity} relies on the following elementary connection between Sobolev spaces and Fourier coefficients (cf., e.g., Lemma 5.4 in \citet{Einsiedler2017}).
We have that 
\begin{equation*}
\textstyle
\label{lemma:regularity:eq1}
\begin{split} 
    H^k(\Torus^d) = \left\{ f \in L^2(\Torus^d) \colon \sum_{n \in \Z^d}  \abs{\hat{f}(n)}^2 \norm{n}_2^{2k}  < \infty \right\}
\end{split}
\end{equation*}
and for all $f \in H^k(\Torus^d)$ the Sobolev norm is equivalent to 
\begin{equation}
\label{lemma:regularity:eq2}
\begin{split} 
    \norm{f}_{H^k(\Torus^d)}^2 \simeq \sum_{n \in \Z^d} \abs{\hat{f}(n)}^2 \norm{n}_2^{2k}.
\end{split}
\end{equation}
Note that 
\eqref{lemma:regularity:ass1} and
the fact that $(\phi_n)_{n \in \Z^d}$ is an orthonormal basis of $L^2(\Torus^d)$ implies that 
for all $n \in \Z^d$ we have that 
\begin{equation*}
\label{lemma:regularity:eq3}
\begin{split} 
    \hat{\Noisevariable}(n) = \stddev_n Z_n.
\end{split}
\end{equation*}
The assumption that $\Exp[\big]{\norm{\functionRV}_{H^k(\Torus^d)}^2} < \infty$, \cref{eq:stddev}, and \cref{lemma:regularity:eq2} therefore imply that
\begin{equation*}
\label{lemma:regularity:eq4}
\begin{split} 
    \Exp*{\norm{\Noisevariable}_{H^k(\Torus^d)}^2}
&\simeq
    \Exp*{\sum_{n \in \Z^d} \abs{\hat{\Noisevariable}(n)}^2 \norm{n}_2^{2k}} \\
&=
    \Exp*{\sum_{n \in \Z^d} \abs{\stddev_n Z_n}^2 \norm{n}_2^{2k}} \\
&=
    \sum_{n \in \Z^d} (\stddev_n)^2 \norm{n}_2^{2k} \, \Exp*{  \abs{Z_n}^2 } \\
&=
    \sum_{n \in \Z^d} \Var[\big]{\widehat{\functionRV}(n) } \norm{n}_2^{2k} \\
&\leq
    \sum_{n \in \Z^d} \Exp*{\abs{\widehat{\functionRV}(n)}^2}  \norm{n}_2^{2k} \\
&=
    \Exp*{ \sum_{n \in \Z^d} \abs{\widehat{\functionRV}(n)}^2  \norm{n}_2^{2k} }\\
&\simeq
    \Exp*{\norm{\functionRV}_{H^k(\Torus^d)}^2} 
< 
    \infty.
\end{split}
\end{equation*}
\end{proof}

\section{Experiment description}\label{appendix:exps}
\subsection{Dataset and spectral transform preprocessing}
All experiments use the datasets from DiffusionPDE \cite{huang2024diffusionpde}, generated numerically at a spatial resolution of $128 \times 128$. For the Poisson and Helmholtz equations, each sample consists of a (\PDE\ coefficient, solution) pair obtained via a second-order finite difference scheme. For the Navier--Stokes equations, samples are vorticity trajectories generated following the FNO framework \cite{Li2021} using its publicly released code; each trajectory consists of 10 consecutive time steps, corresponding to one second of physical simulation time, and constitutes a complete spatio-temporal snapshot of the vorticity field over that interval.
We use the original training and test splits provided in DiffusionPDE to ensure a fair and direct comparison with existing diffusion-based \PDE\ solvers.

Prior to training, all fields are transformed into the spectral domain. For the Poisson and Helmholtz equations, we employ a discrete sine transform, while for the Navier--Stokes equations we use a standard Fourier transform consistent with the periodic boundary conditions.
Note that for the Poisson and Helmholtz equations, the discrete sine transform requires the input to vanish at the boundary. The \PDE\ coefficients do not satisfy this requirement, so applying the sine transform directly would be ill-posed and produce Gibbs-type spectral artifacts.
To address this, we pad the coefficients \(a\) with four external layers that decrease smoothly to zero at the boundary, making the sine transform well-defined.

To reduce the effective dimensionality and focus learning on the dominant modes, we truncate the spectral coefficients.
All subsequent training and inference operations are performed in this truncated spectral space. 
For Poisson and Helmholtz, we retain $44\times44$ modes using a hyperbolic truncation strategy (see \cref{truncation_hyper}), which prioritizes low-frequency modes while gradually reducing high-frequency components.
To produce a fixed-size square input for the network, the axis-aligned strips are 
folded into the remaining entries.
\begin{figure}[H]
    \centering
    \includegraphics[width=0.5\textwidth]{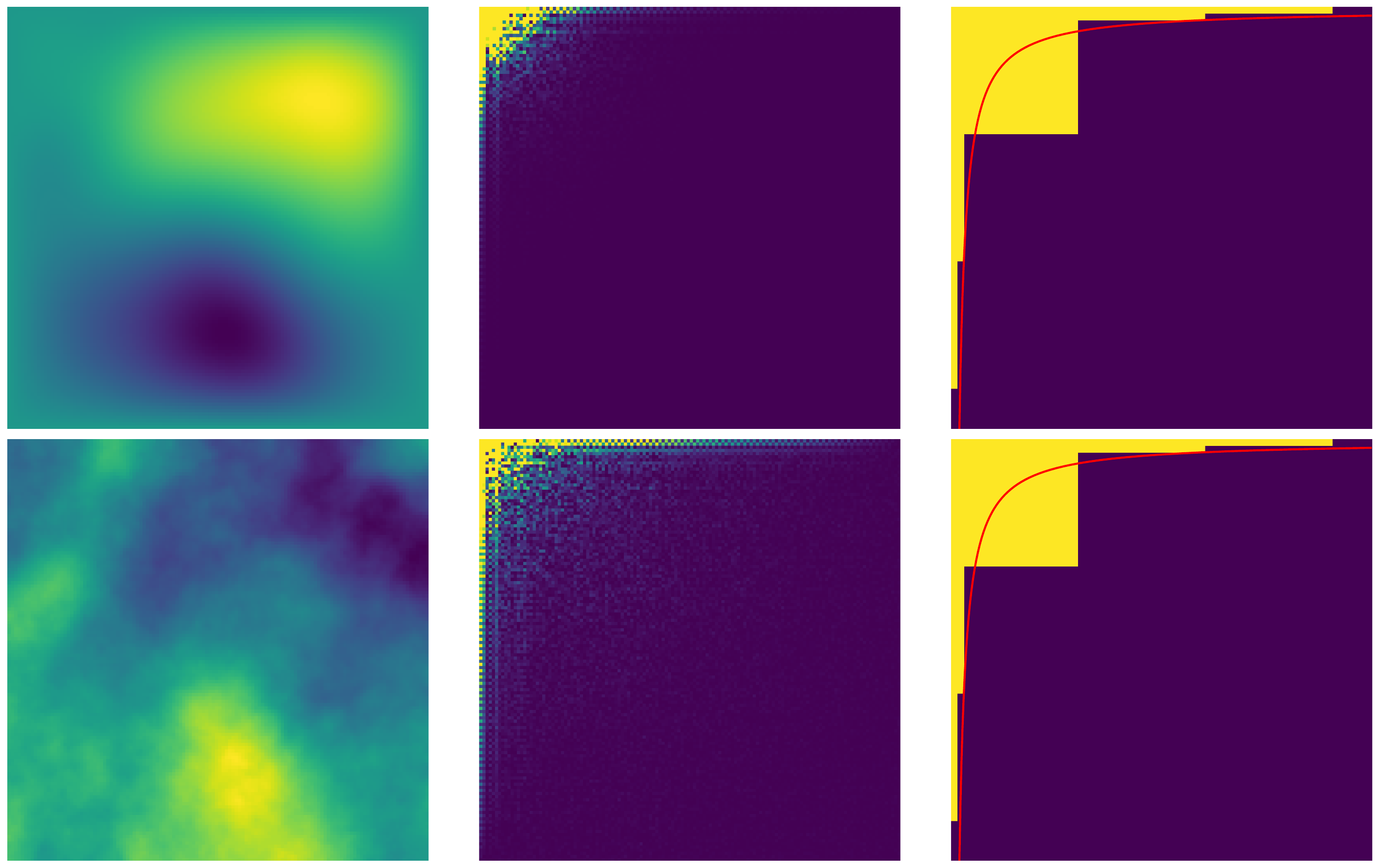} 
    \caption{Truncation of sine coefficients used for the Poisson and Helmholtz equations illustrated on a Poisson example. Left: heatmap of the solution $u$ and corresponding coefficient $a$. Center: a heatmap of the sine coefficients shown on a clipped color scale to make the smaller high-frequency components visible. Right: the hyperbolic truncation mask with the boundary highlighted in red.
    }
    \label{truncation_hyper}
\end{figure}
For Navier--Stokes equations, we retain the inner $32\times32$ square of Fourier modes (selected via fftshift prior to truncation), which, after inverting the shift, results in \cref{truncation_ns}.
\begin{figure}[H]
    \centering
    \includegraphics[width=0.5\textwidth]{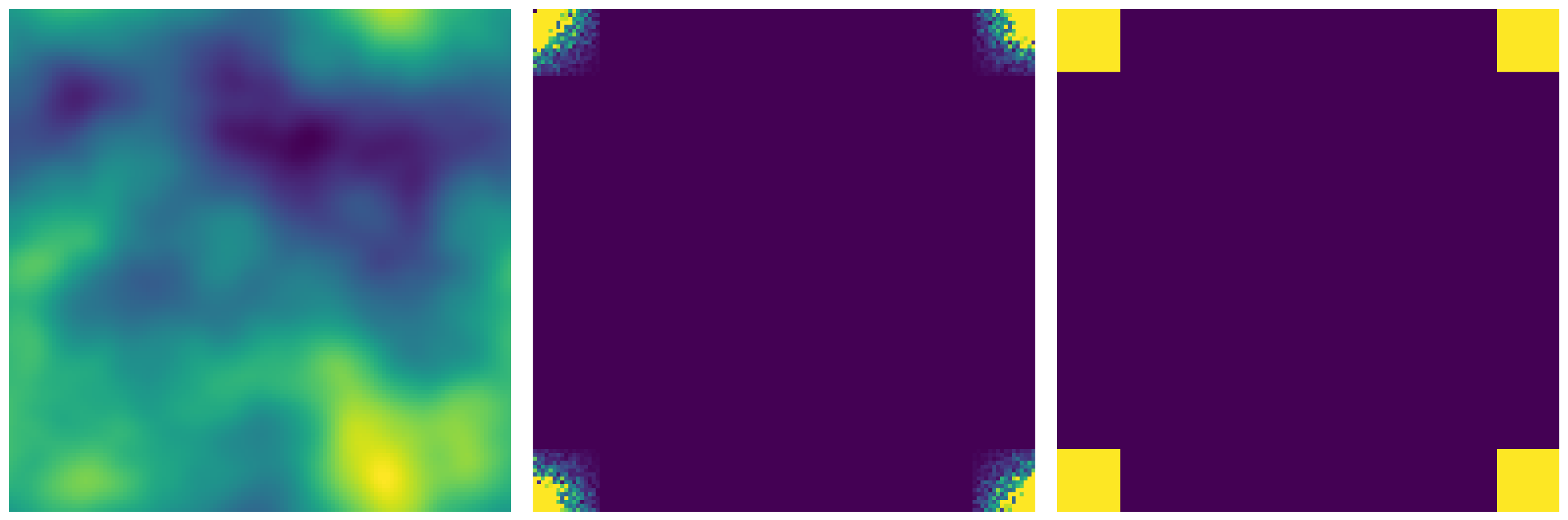} 
    \caption{Truncation of Fourier coefficients for the Navier--Stokes problem. 
    Left: the vorticity field $w$ at the initial time $t_1$.
    Center: the magnitude of its Fourier coefficients shown without an fftshift so that low frequencies appear in the four corners.
    Right: the truncation mask that retains the inner $32 \times 32$ block of low-frequency modes (yellow corners after inverse shift correspond to a centered low-frequency block).}
    \label{truncation_ns}
\end{figure}

We have chosen the spectral truncation so that the reconstructed fields closely match the original solutions.
We ablate the truncation strategy in \cref{sec:truncation_ablation}, confirming that our reduction of the full $128\times128$ resolution to a $44\times44$ coefficient grid for Poisson and Helmholtz and $32\times32$ for Navier--Stokes provides a good accuracy/cost trade-off.

\subsection{Training details}
All training runs were performed on 4 NVIDIA GeForce RTX 2080 GPUs. Training time for the Poisson and Helmholtz datasets was approximately 3 hours, while training for the Navier--Stokes dataset required about 10 hours. Although the per-frame spectral truncation for Navier--Stokes is smaller ($32\times32$ versus $44\times44$), each sample comprises a full trajectory of 10 time steps, substantially increasing the effective problem dimensionality and thus the training cost compared to the Poisson and Helmholtz equations.

The network architecture for the denoiser is based on a Vision Transformer.
We also experimented with a U-Net architecture, which provided comparable results in terms of accuracy. In all cases, our model has 2M parameters compared to the 54M in the DiffusionPDE and FunDPS networks.

The training objective is a standard denoising diffusion loss applied to the truncated coefficients.

\subsection{Inference details} 
For the Poisson, Helmholtz, and Navier--Stokes equations, we perform 100 independent simulations in each experimental setting. The principal metric we report is the relative error, while we also evaluate the \PDE\ residual to quantify how well the generated solutions satisfy the underlying differential equations. 

Our method consistently produces lower \PDE\ residuals compared to baseline approaches, thanks to the accurate computation of derivatives in Fourier space. For example, in \cref{fig:comparison} we compare the Laplacian of the solution $u$ computed with our Fourier-based derivatives versus finite-difference approximations obtained in other methods. The figure illustrates that our approach captures the differential structure more accurately, which directly contributes to improved \PDE\ consistency.

\begin{figure}[H]
    \centering
    \includegraphics[width=0.6\textwidth]{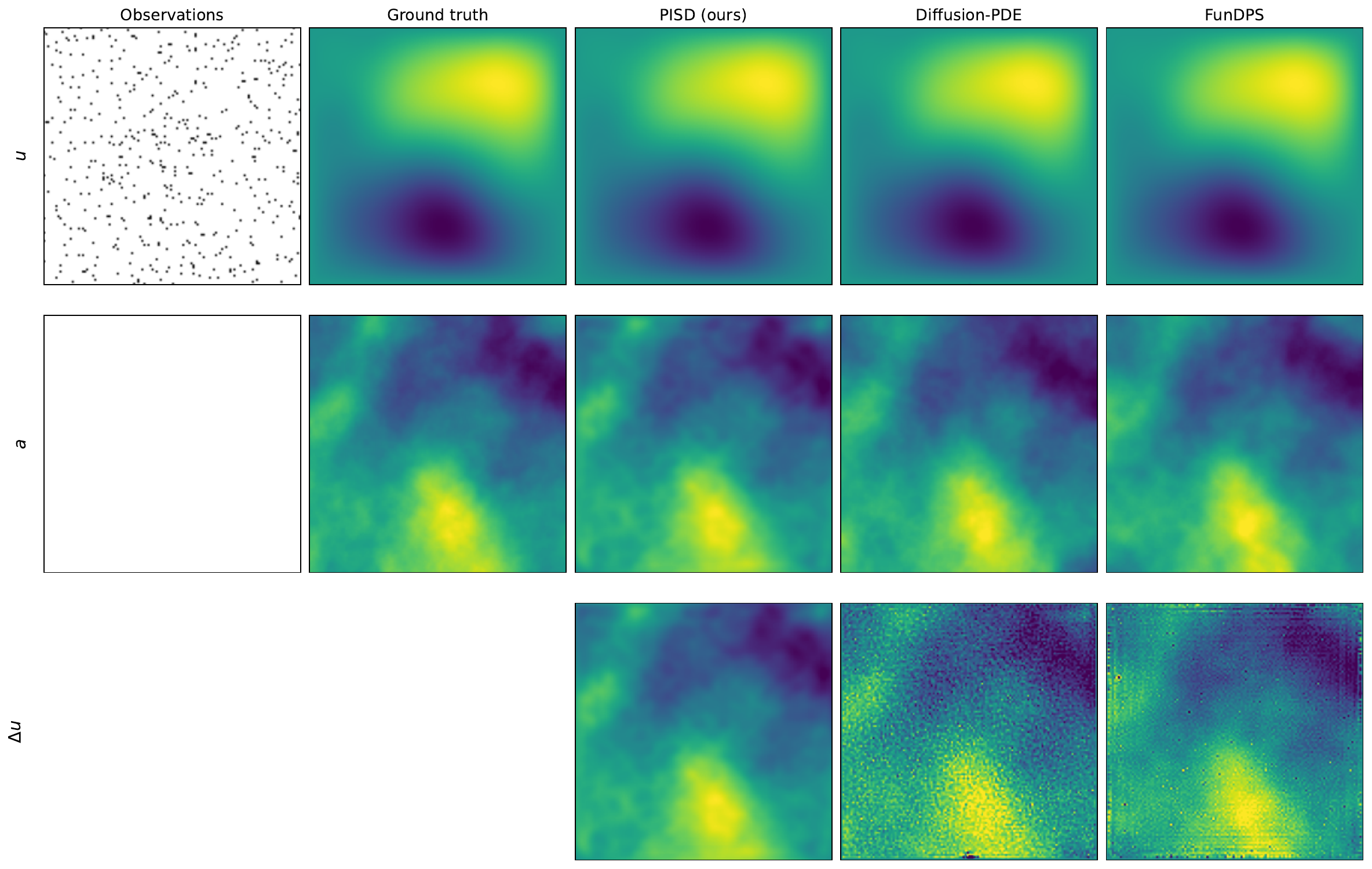} 
    \caption{
    Inverse Poisson problem with 500 sparse observations of $u$ and no observations of $a$.
    Top row: the observation mask, the ground truth, and the reconstructed solutions $u$. Middle row: the empty observation mask, the ground truth, and the reconstructed coefficients $a$. Bottom row: the Laplacian $\Delta u$ of each reconstruction, for the Poisson equation, this should match the coefficient $a$ in the middle row. \PISD\ computes $\Delta u$ spectrally and recovers a clean Laplacian, whereas the finite-difference Laplacians of DiffusionPDE and FunDPS exhibit substantial noise.
    }
    \label{fig:comparison}
\end{figure}

\paragraph{Guidance coefficients.}
The guidance coefficients used during inference vary slightly depending on the task and number of observations. We report the values for the Poisson and Helmholtz equations in \cref{tab:coeff_poisson_helmholtz} and for the Navier--Stokes equations in \cref{tab:coeff_navier_stokes}.
We refer to \cref{sec:ablation_weights} for an ablation of the guidance coefficients.

\begin{table}[H]
\centering
\small
\caption{Guidance coefficients used for Poisson and Helmholtz problems.}
\label{tab:coeff_poisson_helmholtz}
\begin{tabular}{llccc}
\toprule
\textbf{Case} 
& \textbf{Obs.} 
& $\boldsymbol{\lambda_u}$ 
& $\boldsymbol{\lambda_a}$ 
& $\boldsymbol{\lambda_{\text{PDE}}}$ \\
\midrule
 \multirow{3}{*}{Forward problem} & 500  & 0 & 0.05 & 0.0005 \\
        & 1000 & 0 & 0.05 & 0.0005 \\
        & Full & 0 & 0.05 & 0.0001 \\
        \cmidrule(lr){1-5}
 \multirow{3}{*}{Inverse problem} & 500  & 20 & 0 & 0.00005 \\
        & 1000 & 20 & 0 & 0.00005 \\
        & Full & 40 & 0 & 0.000005 \\
        \cmidrule(lr){1-5}
 \multirow{2}{*}{Joint reconstruction}  & 100  & 40 & 0.05 & 0.0002 \\
        & 200 & 40 & 0.05 & 0.0002 \\
\bottomrule
\end{tabular}
\end{table}

\begin{table}[H] \centering \small \caption{Guidance coefficients used for the Navier--Stokes equations.} \label{tab:coeff_navier_stokes} \begin{tabular}{l|ccc} \toprule \textbf{Obs.} & $\boldsymbol{\lambda_{obs}}$ & $\boldsymbol{\lambda_{\text{PDE}}}$ \\ 
\midrule 
200 &  0.0001 & 0.5 \\
500 &  0.00001 & 0.5 \\ 
\bottomrule \end{tabular} \end{table}

\paragraph{Inference time.}
\Cref{tab:inference_time} reports the time required to generate a single solution on a GeForce RTX 2080 Ti for the Poisson and Helmholtz equations, comparing \PISD\ against DiffusionPDE and FunDPS. For the Navier--Stokes equations, a direct comparison is less meaningful since \PISD\ generates the full temporal trajectory, whereas the baseline methods produce only the initial and final states. We nonetheless report absolute runtimes in \cref{tab:NS_runtime} for completeness.

\begin{table}[H]
\centering
\caption{Time (in seconds) to generate one solution on a Geforce RTX 2080 Ti.}
\label{tab:inference_time}
\begin{tabular}{cccc}
\toprule
\small \textbf{PDE}  & \textbf{PISD (ours)} & \small \textbf{DiffusionPDE} & \small \textbf{FunDPS} \\
\midrule
Poisson & 52 & 802 & 153 \\
\midrule   %
Helmholtz & 52 & 802 & 171  \\
\bottomrule
\end{tabular}
\end{table}

\begin{table}[H]
\centering
\caption{Runtime comparison for the Navier--Stokes equations.}
\label{tab:NS_runtime}
\begin{tabular}{l c}
\hline
\textbf{Method} & \textbf{Runtime (s)} \\
\hline
PISD (full trajectory) & 420 \\
DiffPDE (initial and terminal state) & 809 \\
FunDPS (initial and terminal state) & 246 \\
\hline
\end{tabular}
\end{table}

\paragraph{Frequency-aware Adam guidance.}  
\label{sec:frequency_aware_adam}

During inference, \PDE\ and observation constraints are enforced via gradient-based guidance based on \DPS\ (see \cref{sec:guidance}). Unlike standard \DPS, which typically uses plain \GD, we employ a frequency-aware variant of the Adam optimizer. The first- and second-moment estimates are maintained across diffusion steps, and updates are modulated by a frequency-dependent learning rate to prioritize physically meaningful low-frequency modes.

Specifically, let $\hat{X}_k$ denote the Fourier coefficient at mode $k$ and $g_k$ the gradient. The update at step $t$ is computed as:

\begin{align*}
m_k^{(t)} &= \beta_1 \, m_k^{(t-1)} + (1-\beta_1) g_k, \\
v_k^{(t)} &= \beta_2 \, v_k^{(t-1)} + (1-\beta_2) g_k^2, \\
\hat{X}_k^{(t+1)} &= \hat{X}_k^{(t)} - \eta_k \frac{m_k^{(t)}}{\sqrt{v_k^{(t)}} + \epsilon},
\end{align*}
where the effective learning rate $\eta_k$ depends on the frequency mode:
\begin{equation*}
\eta_k =
\begin{cases}
\text{lr}_{\text{low}}, & \text{for low frequencies} \\
\text{lr}_{\text{high}}, & \text{for high frequencies},
\end{cases}
\end{equation*}
with a smooth interpolation between the two values over a transition band at intermediate frequencies.
This formulation allows the low-frequency modes to be updated aggressively, capturing the main structure of the solution, while high-frequency modes are updated conservatively to reduce noise amplification. This design is important for stabilizing the \PDE- and observation-constrained inference process in \PISD\ and achieving low \PDE\ residuals in all tested scenarios. 
We list the concrete choices of parameters in our experiments in \cref{tab:optimizer_params}.

\begin{table}[H]
\centering
\small
\caption{Frequency-aware Adam hyperparameters used across experimental settings.}
\label{tab:optimizer_params}
\begin{tabular}{llcccc}
\toprule
\textbf{Problem} & \textbf{PDE} & $\boldsymbol{\beta_1}$ & $\boldsymbol{\beta_2}$ & $\mathbf{lr}_{\mathbf{low}}$ & $\mathbf{lr}_{\mathbf{high}}$ \\
\midrule

Forward \& Inverse  - partial observation   & Poisson \& Helmholtz & 0.985 & 0.98 & 0.2 & 0.01 \\
Forward \& Inverse  - full observation      & Poisson \& Helmholtz & 0.97  & 0.98 & 0.2 & 0.01 \\
Joint reconstruction                        & Poisson \& Helmholtz & 0.985  & 0.98 & 0.1 & 0.01 \\
All scenarios                               & Navier--Stokes       & 0.97  & 0.98 & 0.1 & 0.01 \\
\bottomrule
\end{tabular}
\end{table}

\section{Ablation studies} \label{appendix:add}

\subsection{Ablations over optimization techniques}
\label{sec:ablation_optim}
We assess the impact of the optimization technique used in the inference phase of \PISD\ through three ablations on the Poisson equation.
First, we show on FunDPS that replacing plain \GD\ with adaptive optimizers improves \DPS-based guidance. 
Second, we jointly ablate the choice of optimizer (\GD, Momentum, Adam) and the frequency-aware learning-rate schedule within \PISD. 
Third, we study the robustness of the best configuration (frequency-aware Adam) against the plain \GD\ baseline across varying numbers of observations and both forward and inverse problems.
Together, the ablations reported in Tables~\ref{tab:fundps_baseline}--\ref{tab:ADAM_SGD} support the frequency-aware Adam optimizer as the preferred optimization technique for \DPS-based guidance in \PISD.

\paragraph{FunDPS baseline.}
We verify that the benefit of adaptive optimization in the enforcement of constraints is not specific to \PISD. Table~\ref{tab:fundps_baseline} reports the relative error on the inverse Poisson problem for FunDPS with different gradient-based guidance.
Notably, Adam achieves the lowest relative error, outperforming both plain \GD\ and Momentum, which confirms that adaptive optimization is a meaningful design choice for \DPS-based diffusion guidance and that this advantage is not specific to \PISD.
We also refer to \citet{Belardi2026} who have concurrently and independently proposed the same Adam-based replacement in \DPS\ guidance and have validated it extensively on image generation tasks.

\begin{table}[H]
\caption{FunDPS with different gradient-based guidance on the inverse Poisson problem (500 observations).}
\label{tab:fundps_baseline}
\centering
\small
\begin{tabular}{lc}
\toprule
\textbf{Method} & \textbf{Rel. err.} \\
\midrule
FunDPS + GD       & $21.09 \pm 7.10\,\%$ \\
FunDPS + Momentum & $18.93 \pm 5.67\,\%$ \\
FunDPS + Adam     & $17.44 \pm 4.79\,\%$ \\
\bottomrule
\end{tabular}
\end{table}

\paragraph{Optimizer and frequency-aware learning rate within \PISD.}
Table~\ref{tab:pisd_ablation} reports a joint ablation of the optimizer and the frequency-aware learning-rate schedule within \PISD. Adam is consistently the best optimizer for \DPS-based guidance, and the frequency-aware schedule yields an additional gain on the inverse problem while remaining competitive in the forward case.

\begin{table}[H]
\caption{Optimizer ablation in \PISD\ applied to the Poisson problem (500 observations).}
\label{tab:pisd_ablation}
\centering
\small
\begin{tabular}{l cc cc}
\toprule
& \multicolumn{2}{c}{\textbf{Inverse problem}} & \multicolumn{2}{c}{\textbf{Forward problem}} \\
\cmidrule(lr){2-3} \cmidrule(lr){4-5}
\textbf{Method} & Freq.-aware & Standard & Freq.-aware & Standard \\
\midrule
\PISD\ + GD          & $49.25 \pm 15.42 \%$ & $61.11 \pm 17.90 \%$ & $3.60 \pm 3.04\%$ & $3.62 \pm 1.97 \%$ \\
\PISD\ + Momentum    & $50.10 \pm 14.45 \%$ & $42.74 \pm 12.97 \%$ & $3.08 \pm 2.25 \%$ & $3.10 \pm 2.31 \%$ \\
\PISD\ + Adam (ours) & $\mathbf{13.81 \pm 3.11 \%}$ & $17.38 \pm 3.75 \%$ & $\mathbf{3.08 \pm 1.71 \%}$ & $3.82 \pm 2.30 \%$ \\
\bottomrule
\end{tabular}
\end{table}

\paragraph{Robustness across observation regimes.}
Finally, we compare the two extremes, plain \GD\ and our frequency-aware Adam, across varying numbers of observations and both forward and inverse problems (Table~\ref{tab:ADAM_SGD}). Frequency-aware Adam consistently achieves lower relative error, lower \PDE\ residuals, and substantially lower variance.
The gap is particularly pronounced in the inverse problem, where plain \GD\ frequently fails to produce meaningful solutions.

\begin{table}[H]
\footnotesize
\caption{Frequency-aware Adam vs.\ plain \GD\ guidance in \PISD\ applied to the Poisson problem.}
\label{tab:ADAM_SGD}
\centering
\begin{tabular}{cccccccc}
\toprule
\textbf{Case} & \textbf{Obs.} &
\multicolumn{3}{c}{\textbf{Freq.-aware Adam (ours)}} &
\multicolumn{3}{c}{\textbf{Plain \GD}} \\
\cmidrule(lr){3-5}\cmidrule(lr){6-8}
& & Rel. err. & PDE res. & Obs. rel. err. &
    Rel. err. & PDE res. & Obs. rel. err. \\
\midrule
\multirow{3}{*}{Forward}
 & 500  & $3.08 \pm 1.71\,\%$ & 0.87 & 0.11\,\% & $3.60 \pm 3.04\,\%$ & 12.35 & 10.99\,\% \\
 & 1000 & $1.47 \pm 0.90\,\%$ & 2.31 & 0.10\,\% & $2.12 \pm 1.34\,\%$ & 12.66 & 12.30\,\% \\
 & Full & $0.05 \pm 0.03\,\%$ & 3.78 & 2.38\,\% & $0.90 \pm 0.43\,\%$ & 13.26 & 13.55\,\% \\
\cmidrule(lr){1-8}
\multirow{3}{*}{Inverse}
 & 500  & $13.81 \pm 3.11\,\%$ & 0.45 & 0.12\,\% & $49.25 \pm 15.42\,\%$ & 7.73  & 5.85\,\% \\
 & 1000 & $12.09 \pm 2.68\,\%$ & 0.44 & 0.12\,\% & $50.52 \pm 15.56\,\%$ & 19.65 & 6.28\,\% \\
 & Full & $7.95  \pm 1.71\,\%$ & 1.33 & 0.10\,\% & $50.41 \pm 15.59\,\%$ & 22.23 & 6.02\,\% \\
\bottomrule
\end{tabular}
\end{table}

\subsection{Ablation over guidance weights}
\label{sec:ablation_weights}

We assess the robustness of \PISD\ to the guidance weights $\lambda_{\text{obs}}$ and $\lambda_{\text{PDE}}$ on the Poisson problem with 500 observations, varying each weight independently around its default. Results for the forward and inverse settings are reported in Tables~\ref{tab:lambda_sensitivity_fwd} and~\ref{tab:lambda_sensitivity_inv}, respectively. The relative error is remarkably stable across several orders of magnitude in both regimes, which we attribute to the adaptive step sizes of the Adam optimizer that automatically rescale the contribution of each guidance term. \PISD\ therefore does not require careful per-task tuning of $\lambda_{a}$, $\lambda_{u}$, and $\lambda_{\text{PDE}}$.

\begin{table}[H]
\caption{Sensitivity of \PISD\ to $\lambda_{a}$ and $\lambda_{\text{PDE}}$ in case of the forward Poisson problem (500 observations on $a$).}
\label{tab:lambda_sensitivity_fwd}
\centering
\small
\begin{tabular}{cc @{\hskip 2em} cc}
\toprule
\multicolumn{2}{c}{\textbf{Varying} $\lambda_{a}$} &
\multicolumn{2}{c}{\textbf{Varying} $\lambda_{\text{PDE}}$} \\
\cmidrule(lr){1-2}\cmidrule(lr){3-4}
$\lambda_{a}$ & Rel. err. & $\lambda_{\text{PDE}}$ & Rel. err. \\
\midrule
$10^{-4}$           & $3.96 \pm 2.21\ \%$          & $10^{-5}$           & $3.67 \pm 2.24\ \%$ \\
$10^{-3}$           & $3.15 \pm 1.66\ \%$          & $10^{-4}$           & $3.40 \pm 1.94\ \%$ \\
$10^{-2}$  & $2.96 \pm 1.72\ \%$ & $10^{-3}$           & $3.19 \pm 1.95\ \%$ \\
$10^{-1}$           & $3.43 \pm 1.73\ \%$          & $10^{-2}$  & $2.96 \pm 1.57\ \%$ \\
$10^{0}$            & $3.48 \pm 2.06\ \%$          & $10^{-1}$           & $3.89 \pm 1.89\ \%$ \\
\bottomrule
\end{tabular}
\end{table}

\begin{table}[H]
\caption{Sensitivity of \PISD\ to $\lambda_{u}$ and $\lambda_{\text{PDE}}$ in case of the inverse Poisson problem (500 observations on $u$).}
\label{tab:lambda_sensitivity_inv}
\centering
\small
\begin{tabular}{cc @{\hskip 2em} cc}
\toprule
\multicolumn{2}{c}{\textbf{Varying} $\lambda_{u}$} &
\multicolumn{2}{c}{\textbf{Varying} $\lambda_{\text{PDE}}$} \\
\cmidrule(lr){1-2}\cmidrule(lr){3-4}
$\lambda_{u}$ & Rel. err. & $\lambda_{\text{PDE}}$ & Rel. err. \\
\midrule
$10^{0}$         &  $19.30 \pm 4.30\,\%$ & $10^{-8}$           & $18.27 \pm 5.72\,\%$ \\
$10^{1}$           &  $15.75 \pm 3.31\,\%$ & $10^{-7}$           & $11.67 \pm 3.18\,\%$ \\
$10^2$ & $11.86 \pm 2.66\,\%$ & $10^{-6}$  & $11.77 \pm 2.72\,\%$ \\
$10^{3}$         & $11.73 \pm 3.39\,\%$  & $10^{-5}$           & $15.86 \pm 3.34\,\%$ \\
$10^{4}$        &  $18.18 \pm 5.56\,\%$ & $10^{-4}$           & $19.34 \pm 4.37\,\%$ \\
\bottomrule
\end{tabular}
\end{table}

\subsection{Ablation over truncation dimension}
\label{sec:truncation_ablation}
Our default configuration for Poisson and Helmholtz problems retains $44\times44$ sine coefficients with a hyperbolic truncation strategy (see \cref{truncation_hyper}).
To assess the sensitivity of \PISD\ to the choice of truncation strategy, we retrain the model on the Poisson problem under two additional hyperbolic truncation strategies ($32\times32$, $64\times64$) and a square truncation ($44\times44$), which retains the full square block of modes without any hyperbolic technique (see \cref{fig:trun:strategies}). Results are reported in Table~\ref{tab:truncation_ablation}.

\begin{table}[H]
\caption{Effect of the truncation strategy in \PISD\ applied to the Poisson problem (500 observations). Default in \textbf{bold}.}
\label{tab:truncation_ablation}
\centering
\small
\begin{tabular}{ll cc}
\toprule
\textbf{Modes} & \textbf{Strategy} & \textbf{Forward rel. err.\ (\%)} & \textbf{Inverse rel. err.\ (\%)} \\
\midrule
$32\times32$           & Hyperbolic & $3.17 \pm 1.81 \ \%$ & $20.05 \pm 3.17 \ \%$ \\
$\mathbf{44\times44}$  & \textbf{Hyperbolic} & $\mathbf{3.08 \pm 1.71\ \%}$ & $\mathbf{13.81 \pm 3.11\ \%}$ \\
$64\times64$           & Hyperbolic & $3.69 \pm 2.02 \ \%$ & $12.03 \pm 3.41 \ \%$ \\
\midrule
$44\times44$           & Square  & $4.99 \pm 4.17 \ \%$ &  $28.94 \pm 2.73 \ \%$ \\
\bottomrule
\end{tabular}
\end{table}
We observe two trends. First, performance is stable across hyperbolic strategies: the smaller $32\times32$ model is competitive, while the larger $64\times64$ model offers only marginal gains at the cost of slower training and inference. This indicates that the dominant frequency content of the Poisson solutions is already well captured at $44\times44$, justifying our default choice as a good accuracy/cost trade-off. Second, at matched nominal resolution, hyperbolic truncation outperforms square truncation, confirming that keeping high-frequency modes improves reconstruction quality and derivative accuracy.

\begin{figure}[H]
    \centering
    \includegraphics[width=0.75\textwidth]{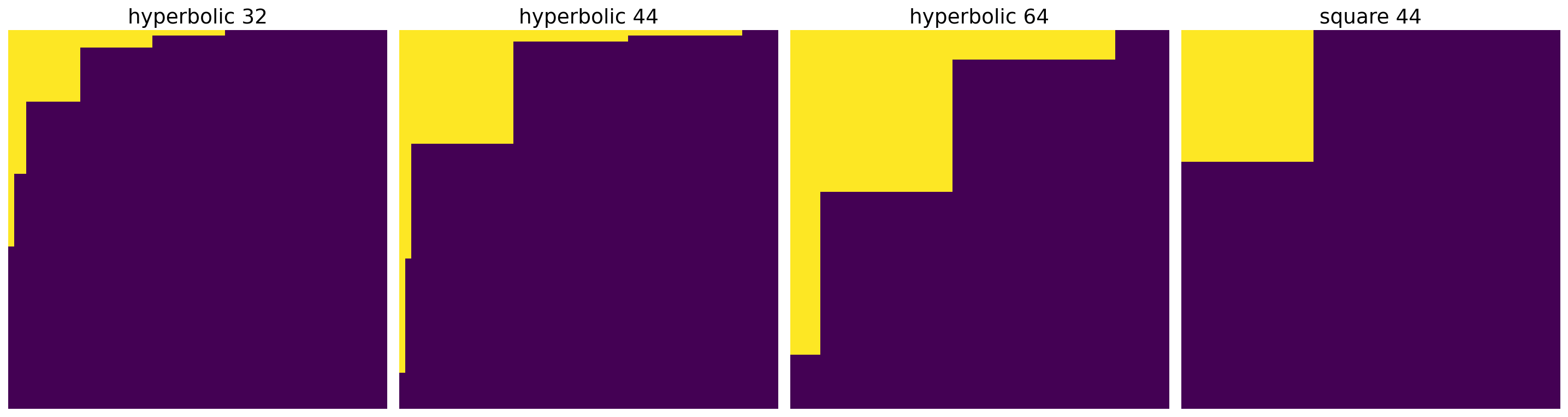}
    \caption{Truncation masks applied to the sine transform 
    coefficients. Yellow regions indicate 
    retained modes. From left to right: hyperbolic masks with resolution 
    $32\times32$, $44\times44$, and $64\times64$, followed by a square 
    $44\times44$ mask. The hyperbolic masks retain a low-frequency core 
    together with axis-aligned strips of higher-frequency modes. 
    To obtain the square representation, these axis-aligned strips are folded into the remaining entries of the square so that all retained modes fit into a compact array suitable as network input. 
    The square mask, 
    by contrast, retains all modes in a uniform block, discarding the 
    high-frequency terms that the hyperbolic strategy retains.}
    \label{fig:trun:strategies}
\end{figure}

\section{Additional figures}

\paragraph{Inference process figures.}
In \cref{fig:inverse_poisson_comparison} we provide a comparison of the inference process between our method and the DiffusionPDE paper. 
\begin{figure}[H]
    \centering
    \includegraphics[width=0.85\textwidth]{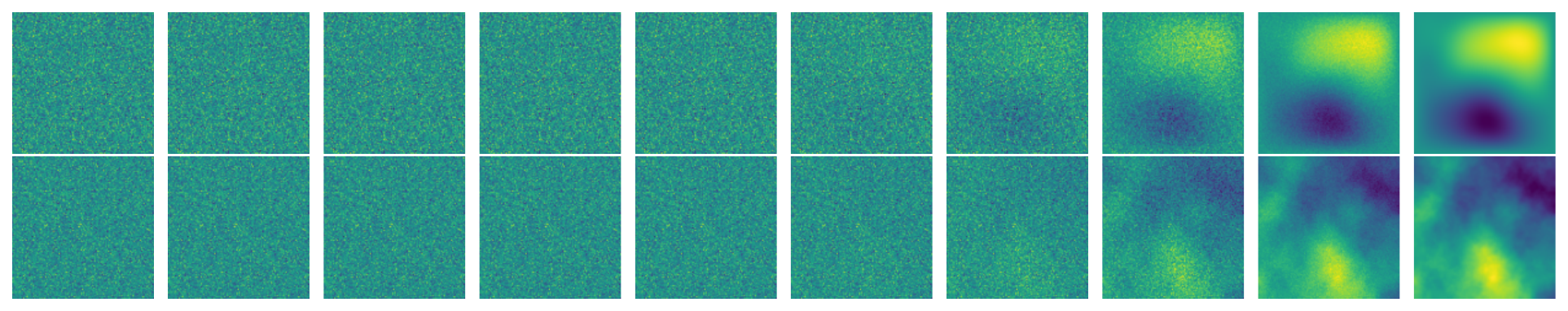}\\[0.6em]
    \includegraphics[width=0.85\textwidth]{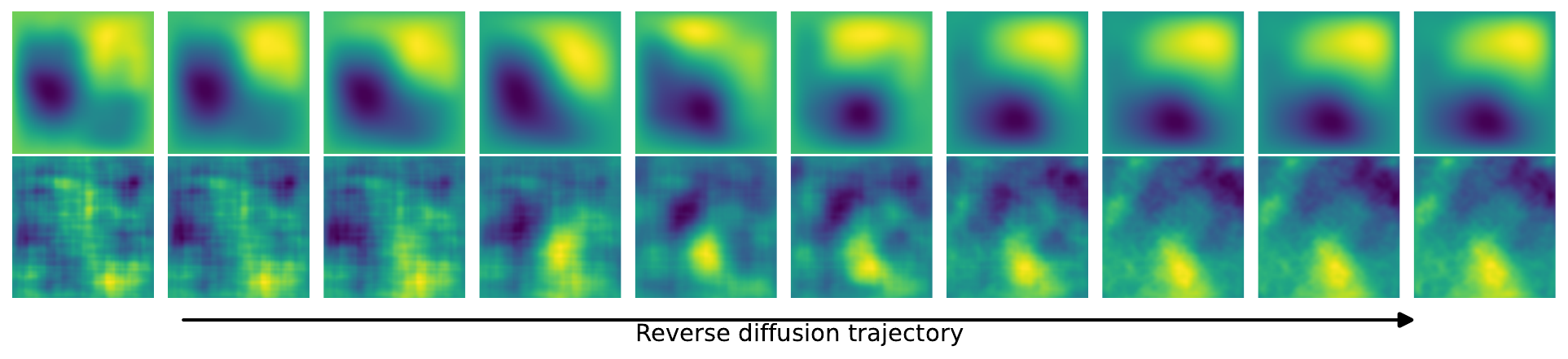}
    \caption{
    Heatmaps of the solution $u$ and coefficient $a$ along the reverse diffusion process for the Poisson problem. Top block (DiffusionPDE): the first row shows $u$ and the second shows $a$; both fields evolve in physical space and remain noisy until the last denoising steps. Bottom block (\PISD, ours): same layout, but with fields obtained by decoding the spectral latent at each step. Throughout the entire trajectory, \PISD's iterates remain in a regular function class (cf.\ \cref{lemma:regularity}).
    }
    \label{fig:inverse_poisson_comparison}
\end{figure}

\paragraph{Qualitative results.}
We present qualitative examples of solutions generated by \PISD, DiffusionPDE, and FunDPS under sparse observation regimes.
\cref{fig:poisson_qualitative,fig:helmholtz_qualitative} illustrate forward and inverse problems for Poisson and Helmholtz equations with $500$ observations, showing reconstructions of both the solution $u$ and the coefficient $a$, together with the corresponding pointwise error maps.

\begin{figure}[H]
    \centering
    \includegraphics[width=0.9\textwidth]{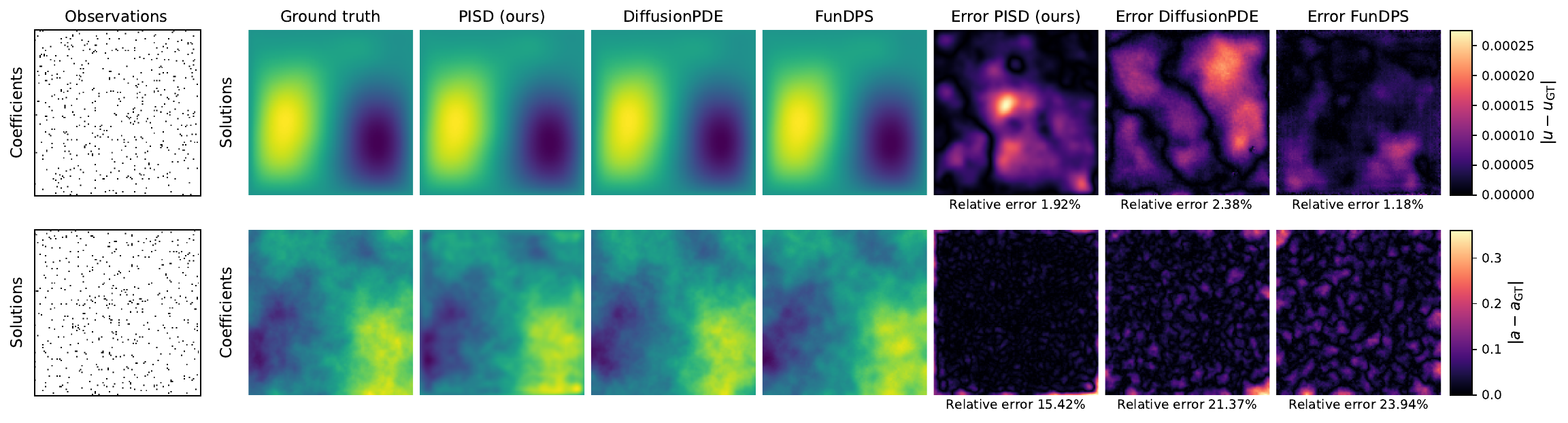} 
    \caption{Heatmaps of reconstructions for the Poisson forward problem (top) with 500 sparse observations of the coefficient $a$, and the Poisson inverse problem (bottom) with 500 sparse observations of the solution $u$. For each task, we show the observation mask, the ground truth, the reconstructions produced by \PISD, DiffusionPDE, and FunDPS, and the corresponding pointwise absolute errors. Relative errors are reported below each error map.}
    \label{fig:poisson_qualitative}
\end{figure}

\begin{figure}[H]
    \centering
    \includegraphics[width=0.9\textwidth]{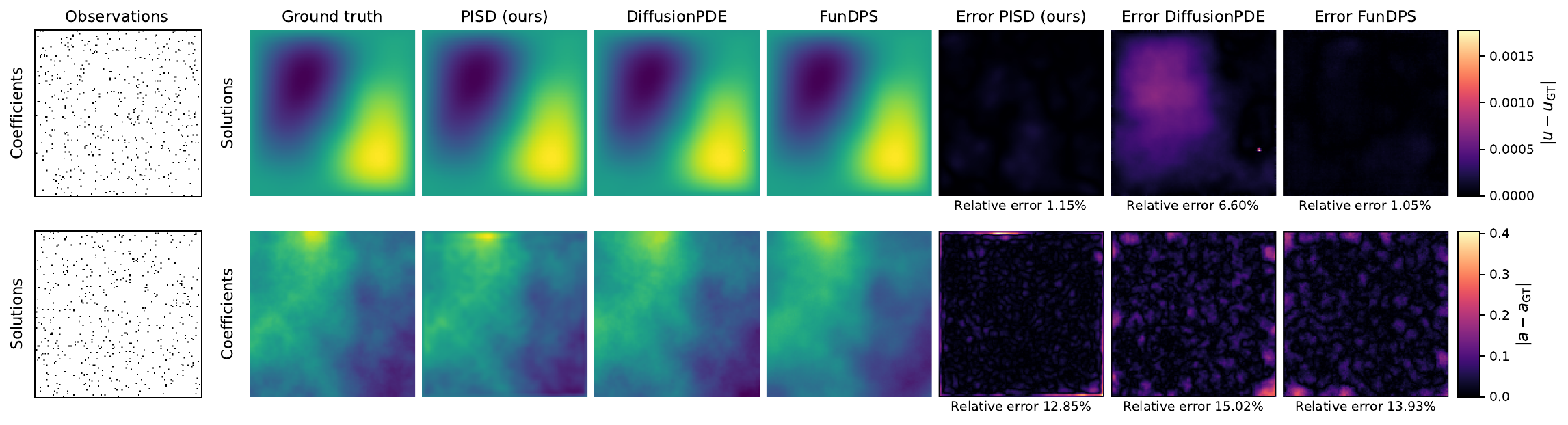} 
    \caption{Heatmaps of reconstructions for the Helmholtz forward problem (top) with 500 sparse observations of the coefficient $a$, and the Helmholtz inverse problem (bottom) with 500 sparse observations of the solution $u$. For each task, we show the observation mask, the ground truth, the reconstructions produced by \PISD, DiffusionPDE, and FunDPS, and the corresponding pointwise absolute errors. Relative errors are reported below each error map.}
    \label{fig:helmholtz_qualitative}
\end{figure}

\cref{fig:ns} shows a Navier--Stokes example conditioned only on sparse observations at the first and last time steps.
The model successfully reconstructs the full spatio-temporal evolution of the flow, producing coherent intermediate dynamics that satisfy the governing equations.
These qualitative results complement the quantitative comparisons and highlight the ability of \PISD\ to enforce \PDE\ constraints while maintaining global consistency under sparse supervision.

\begin{figure}[H]
    \centering
    \includegraphics[width=0.9\textwidth]{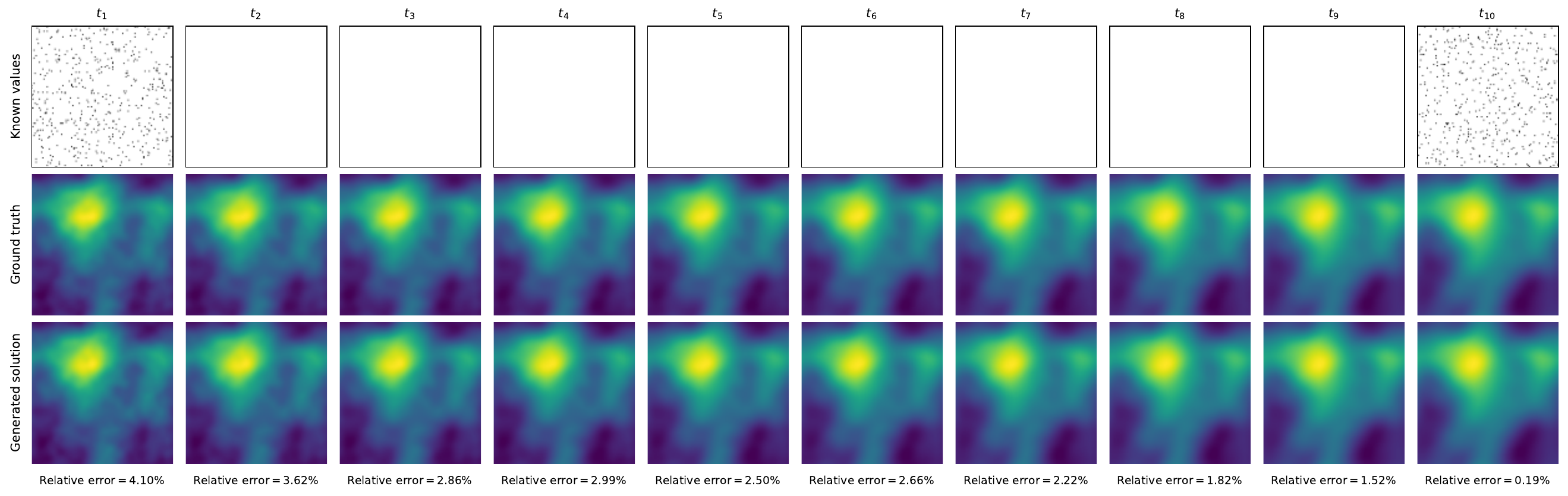} 
    \caption{
    Heatmaps of the vorticity field $w$ along a Navier–Stokes trajectory reconstructed by \PISD\ from 500 sparse observations at the initial time $t_1$ and the final time $t_{10}$ (no observations at intermediate times). Top row: observation masks (only $t_1$ and $t_{10}$ contain known values). Middle row: ground-truth vorticity. Bottom row: \PISD\ reconstruction. Relative errors at each time step are reported below the corresponding column. \PISD\ reconstructs the full spatio-temporal evolution.
    }
    \label{fig:ns}
\end{figure}

\end{document}